\documentclass{article}

\usepackage[preprint]{neurips_2026}

\usepackage[utf8]{inputenc} 
\usepackage[T1]{fontenc}    
\usepackage{hyperref}       
\usepackage{url}            
\usepackage{booktabs}       
\usepackage{amsfonts}       
\usepackage{nicefrac}       
\usepackage{microtype}      
\usepackage{xcolor}         
\usepackage{algpseudocode}
\usepackage{amsmath}
\usepackage{amssymb}
\usepackage{amsthm}
\usepackage{algorithm}
\usepackage{algpseudocode}
\usepackage{mathtools}
\newtheorem{theorem}{Theorem}

\newtheorem{proposition}[theorem]{Proposition}

\usepackage[table]{xcolor}
\usepackage{xcolor}
\usepackage{tcolorbox}
\usepackage{amsmath}
\usepackage{makecell}
\usepackage{multirow}
\usepackage{multicol}
\usepackage{arydshln}
\usepackage{wrapfig}
\tcbuselibrary{breakable,skins}
\definecolor{GCPOgreen}{RGB}{220,245,230}  

\definecolor{myred}{RGB}{241,239,253}    
\definecolor{myblue}{RGB}{235,241,254}   
\definecolor{mygreen}{RGB}{254,242,239}  
\definecolor{myyellow}{RGB}{236,247,244} 
\definecolor{mypurple}{RGB}{254,239,245} 
\definecolor{mygray}{RGB}{235,235,235}   
\definecolor{mywhite}{RGB}{255,255,255}   
\definecolor{GCPOgreen}{RGB}{225,245,230} 
\definecolor{darkgreen}{RGB}{0,128,0}  
\usepackage{booktabs}
\usepackage{colortbl}
\usepackage{xcolor}
\usepackage{graphicx}
\usepackage{geometry}
\title{Breaking \textit{Winner-Takes-All}: Cooperative Policy Optimization Improves Diverse LLM Reasoning}

\author{Haoxuan Chen\textsuperscript{1} \quad Tianming Liang\textsuperscript{1} \quad Wei-Shi Zheng\textsuperscript{1} \quad Jian-Fang Hu\textsuperscript{1}\thanks{Corresponding author.} \\
\textsuperscript{1}ISEE Lab, Sun Yat-sen University \\
\texttt{\{chenhx253, liangtm\}@mail2.sysu.edu.cn, hujf5@mail.sysu.edu.cn} \\
}

\begin{document}

\maketitle

\begin{abstract}

  Reinforcement learning with verifiers (RLVR) has become a central paradigm for improving LLM reasoning, yet popular group-based optimization algorithms like GRPO often suffer from exploration collapse, where the models prematurely converge on a narrow set of high-scoring patterns, lacking the ability to explore new solutions. Recent efforts attempt to alleviate this by adding entropy regularization or diversity bonus. However, these approaches do not change the \textit{winner-takes-all} nature, where rollouts still compete for individual advantage rather than cooperating for maximizing global diversity. In this work, we propose Group Cooperative Policy Optimization (GCPO), which shifts the training paradigm from rollout competition to team cooperation. Specifically, GCPO replaces independent rollout scoring with team-level credit assignment: a rollout is rewarded by how much it contributes to the team's valid solution coverage, rather than its individual accuracy. This coverage is described as a determinant volume over reward-weighted semantic embeddings, where only correct and non-redundant rollouts contribute to this volume. During advantage estimation, GCPO redistributes the collective team reward to each single rollout according to its average marginal contribution to the team. This cooperative training paradigm routes optimization toward non-redundant correct reasoning paths. Experiments across multiple reasoning benchmarks demonstrate that GCPO significantly improves both reasoning accuracy and solution diversity over existing approaches. Code will be released at \href{https://github.com/bradybuddiemarch/gcpo}{https://github.com/bradybuddiemarch/gcpo}.

\end{abstract}

\section{Introduction}
Reinforcement learning with verified reward (RLVR) has emerged as a powerful paradigm for enhancing LLM reasoning~\cite{shao2024deepseekmath, liu2024deepseek}. Group-based policy optimization methods such as GRPO~\cite{guo2025deepseek} leverage relative comparisons among sampled solutions to improve training stability, making RLVR practical at scale. With outcome-based supervision, RLVR has driven substantial gains across math, coding, logical, and multi-step reasoning tasks, making them as a key component of reasoning-oriented LLM training~\cite{comanici2025gemini, singh2025openai, yang2025qwen3}.
\begin{figure}[t]
    \begin{center}
    \includegraphics[width=0.9\textwidth
    ]{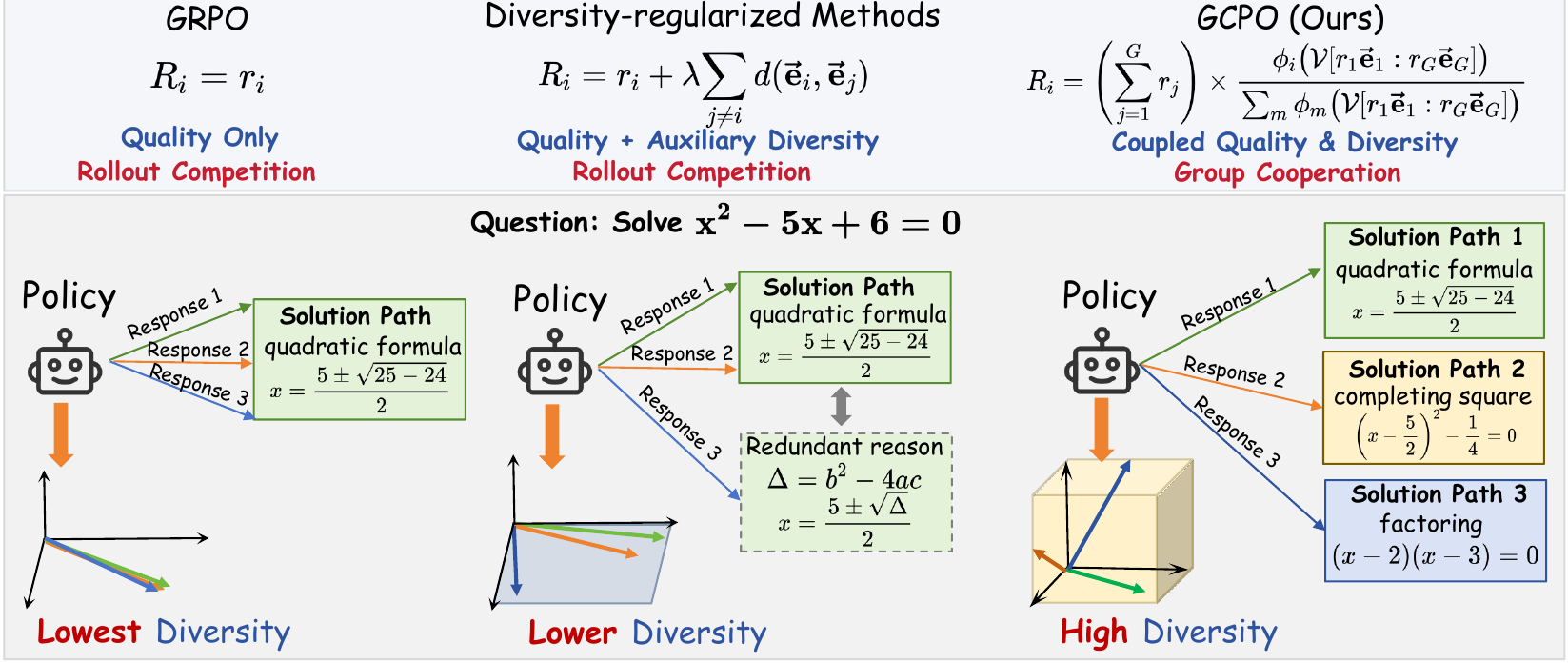}
    \vskip -0.5em
    \caption{
    \textbf{Comparison of group-based optimization strategies.} \textit{Left}: GRPO optimizes solely for individual correctness, often leading to exploration collapse. \textit{Middle}: Diversity-regularized methods add a diversity bonus to the reward, but this only produces superficial variations of already-successful reasoning paths. \textit{Right}: Our GCPO considers each rollout's marginal contribution to the group's coverage of solutions, incentivizing diverse correct reasoning paths.
    }
    \label{fig:intro}
    \end{center}
     \vskip -2em
\end{figure}

Despite these successes, group-based optimization approaches often struggle with exploration collapse: while models improve pass@1 accuracy, they often do so by concentrating probability mass on a narrow set of homogeneous reasoning patterns, lacking the ability to explore new, potentially better reasoning paths~\cite{liu2025prorl,shen2025entropy,wu2025invisible, chen2025eepo}. This highlights a central challenge for RLVR: improving reasoning requires not only reinforcing high-score responses but also sustaining exploration across distinct reasoning paths~\cite{jiang2025risk, walder2025pass}.




Recent works attempt to address this by either adding entropy regularization into the policy distribution~\cite{shen2025entropy,chen2025pass,agarwal2025unreasonable} or enforcing variation among solutions under global constraints~\cite{zhang2025right,chen2025post,li2025jointly,hu2025diversity}. However, these approaches typically treat diversity term as an additional constraint to the correctness rewards, rather than naturally combining them for cooperative optimization. More importantly, such regularization paradigms do not essentially change the \textit{winner-takes-all} nature: rollouts are still evaluated and optimized by their competitive advantages over others. Under this formulation, models tend to conservatively maximize rewards by generating superficial variations of already-successful solutions, rather than exploring truly diverse reasoning paths, as shown in Figure~\ref{fig:intro}. Consequently, the contradiction between quality and diversity remains largely unresolved in LLM reasoning.

In this work, we propose GCPO (\textit{\textbf{G}roup \textbf{C}ooperative \textbf{P}olicy \textbf{O}ptimization}), a novel cooperative algorithm for RL-based LLM reasoning. Instead of scoring each rollout independently, GCPO assigns rewards for them through team-level credit assignment. Specifically, GCPO treats a group of sampled rollouts as a cooperative team, and defines its \textbf{team value} as the team’s coverage of valid reasoning paths. Under this objective, each rollout is rewarded by how much it expands the team’s coverage beyond what the others already provide. GCPO instantiates this team value as a determinantal volume, which couples correctness and reasoning diversity. It increases when correct rollouts cover distinct reasoning paths, while incorrect or redundant ones contribute little additional value. To fairly allocate this team value, GCPO redistributes the team's original reward to each rollout according to its average marginal contribution to the team. This team-level cooperative mechanism preserves the task-level reward budget while routing optimization toward non-redundant correct reasoning paths. Extensive experiments across reasoning benchmarks covering multiple domains show that GCPO improves both reasoning accuracy and meaningful solution diversity.

\section{Related Work}
\label{related_work}

\noindent \textbf{RLVR.} 
RLVR has become a prominent approach for improving LLM reasoning~\cite{cobbe2021training, guo2025deepseek, openai-o1}. While effective, it often suffers from limited exploration, where policies prematurely converge to a narrow set of reasoning patterns~\cite{shen2025entropy,wu2024progress}. To mitigate this, prior work has introduced unstructured randomness via entropy regularization~\cite{hou2025advancing}, modified objectives using pass@k-based rewards~\cite{chen2025pass}, or intervened in rollout dynamics~\cite{chen2025eepo}. However, these methods mainly stimulate exploration by perturbing sampling behavior with exploration-oriented signals. In contrast, GCPO keeps the rollout process simple and introduces cooperation at team-level: sampled rollouts are evaluated as a team, so exploration is guided by their collective coverage rather than isolated stochasticity.

\noindent \textbf{Diversity in LLM Reasoning.} 
Prior studies promote diversity by manipulating policy uncertainty at different levels. Token-level methods primarily rely on entropy bonuses or KL constraints~\cite{cui2025entropy,liu2025ettrl,yu2025dapo,agarwal2025unreasonable,shen2025entropy,yao2025diversity} to increase short-term randomness, but fail to explicitly encourage diverse reasoning rollouts. More recent global approaches leverage pass@k signals~\cite{chen2025pass,walder2025pass} or promote semantic diversity among candidate solutions~\cite{chen2025post,li2025jointly,hu2025diversity}. However, these methods usually introduce diversity as an auxiliary objective, while leaving rollout-level credit assignment unchanged. In contrast, GCPO incorporates diversity into a group-level value function and assigns credit by each rollout's marginal contribution to reward-gated semantic coverage.

\section{GCPO: From Rollout Competition to Team Cooperation}
\label{sec:method}
\begin{figure*}[t]
    \vskip -0.6em
    \begin{center}
    \includegraphics[width=0.97\textwidth,height=0.25\textwidth]{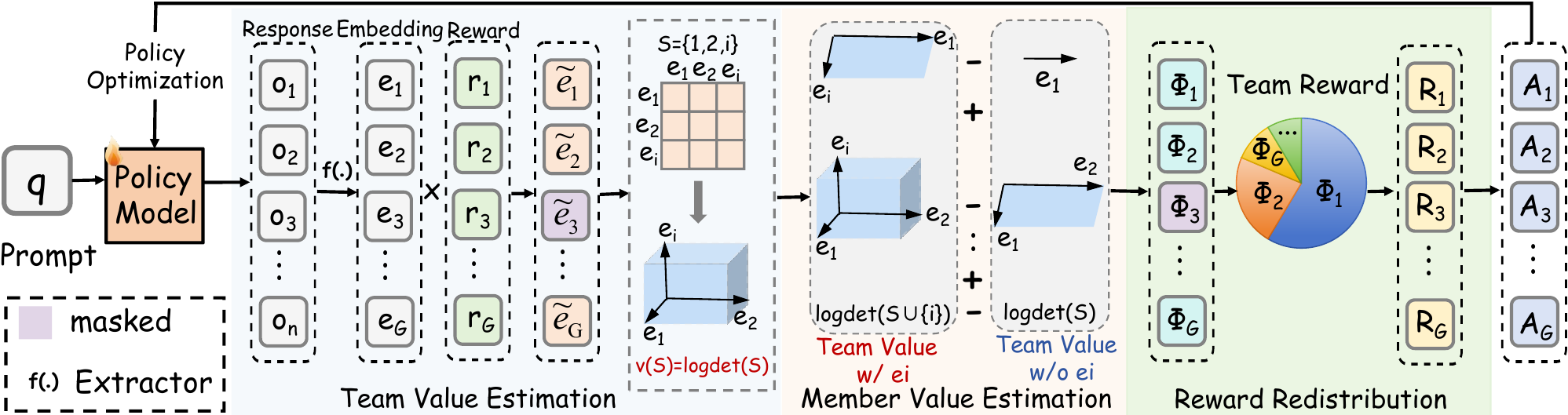}
    \vskip -0.6em
    \caption{
    Pipeline of GCPO framework. We formulate the contribution via coupled diversity and quality of response $o_i$ within a group of $G$ rollouts as reward shaper to incentivize deep exploration.  
    }
    \label{fig:framework}
    \end{center}
    \vskip -1.5em
\end{figure*}


\subsection{Preliminaries} \label{sec:preliminaries}
\textbf{Group Relative Policy Optimization.}
We briefly review GRPO~\cite{guo2025deepseek}, which serves as the optimization backbone for RLVR in reasoning tasks. For each query $q$, the old policy model $\pi_{\theta_{\text{old}}}$ samples $G$ candidate responses $\{o_1, \dots, o_G\}$. Each response is assigned a verifiable binary reward $r_i \in \{0,1\}$ based on matching against the ground-truth answer, which helps mitigate reward hacking~\cite{gao2023scaling}. The policy $\pi_{\theta}$ is updated by maximizing:
\begin{equation}\label{grpo}
J_{\text{GRPO}}(\theta) = \mathbb{E}_{q\sim\mathcal{D}, \{o_i\}_{i=1}^G\sim\pi_{\theta_{\text{old}}}(\cdot|q)} \left[ \frac{1}{G} \sum_{i=1}^{G}  \sum_{t=1}^{|o_i|} \text{CLIP}(\rho_{i,t}, A_i) - \beta \mathbb{KL}(\pi_\theta||\pi_{\text{ref}})  \right],
\end{equation}
where $\rho_{i,t} \!=\! \frac{\pi_\theta(o_{i,t}|q, o_{i,<t})}{\pi_{\theta_{\text{old}}}(o_{i,t}|q, o_{i,<t})}$ is the importance weight. The objective employs PPO-style clipping $\text{CLIP}(\rho_{i,t}, A_i) \!=\! \min(\rho_{i,t} A_i, \text{clip}(\rho_{i,t}, 1-\epsilon, 1+\epsilon)A_i)$ for trust-region updates~\citep{schulman2017proximal}, and a KL penalty to constrain deviation from a reference model $\pi_{\text{ref}}$. 

\vspace{-.22mm}
\textbf{Determinantal Point Processes.}
Determinantal Point Processes (DPPs) \citep{kulesza2012determinantal} are probabilistic models over subsets that favor diversity, with L-ensembles quantifying subset diversity through determinant-based geometric volume. Let $\mathcal{Y} = \{1, 2, \dots, N\}$ be a ground set of items, and $\mathbf{L} \in \mathbb{R}^{N \times N}$ be a symmetric, positive semi-definite matrix. An L-ensemble defines a probability measure over random subsets $Y \subseteq \mathcal{Y}$ such that for every subset $A \subseteq \mathcal{Y}$:
\begin{equation}
    \text{Pr}(Y = A) \propto \det(\mathbf{L}_A)
\end{equation}
where $\mathbf{L}_A$ denotes the principal submatrix of $\mathbf{L}$ indexed by $A$, $\det(\mathbf{L}_A)$ is the determinant of $\mathbf{L}_A$. 

Geometrically, if the entries of $\mathbf{L}$ represent inner products of feature vectors, $\det(\mathbf{L}_A)$ corresponds to the squared volume spanned by the items in $A$. This volume naturally increases when the vectors are diverse and diminishes toward zero when they are similar. 



\subsection{Group Cooperative Policy Optimization}
\label{sec:gcpo_framework}
In this work, we propose GCPO, an algorithm that shifts the \emph{competition} among sampled rollouts to \emph{cooperation}. GCPO treats the rollout group as a cooperative team and rewards each rollout by how much it expands the team's coverage of valid reasoning directions. As illustrated in Figure~\ref{fig:framework}, this is realized through three components: quality-diversity coupled team value estimation (\S\ref{sec:volume}), Shapley-based cooperative member value estimation(\S\ref{sec:credit}), and reward redistribution (\S\ref{sec:redistribution}).

\subsubsection{Team Value Estimation}
\label{sec:volume}

Here, we introduce a team value function to quantify the collective coverage of valid reasoning paths within a rollout group. The proposed formulation couples two desiderata: \textbf{quality}, which measures the objective correctness of individual rollouts, and \textbf{diversity}, which captures the extent to which the team spans distinct and non-redundant semantic directions. 

For each response $o_i$, we extract a semantic representation and weight it by its reward, 
i.e., $z_i = f_\phi(o_i),
\bar z_i = \frac{z_i}{\lVert z_i \rVert_2},
\tilde z_i = r_i \bar z_i$,
where $f_\phi$ denotes the frozen encoder and $r_i$ is the reward assigned to $o_i$. 
We then define a quality-diversity coupled Gram kernel:$L_{ij} = \langle \tilde z_i, \tilde z_j \rangle = r_i r_j \langle \bar z_i, \bar z_j \rangle.$ This kernel induces a team-level semantic geometry over the responses. It measures the quality-weighted overlap between two reasoning rollouts: high-reward responses contribute strongly, whereas similar responses exhibit redundant coverage. Consequently, correct rollouts with distinct semantic directions enlarge the collective coverage of the team, while redundant ones contribute little.


Inspired by the DPP in \S\ref{sec:preliminaries}, we define the team value of a rollout set $S$ as the regularized log-determinant volume 
\begin{equation}
    v(S)=\log\det\left(I_{|S|}+\eta L_S\right).
    \label{eq:team_value}
\end{equation}
$L_S$ is the submatrix of $L$ indexed by $S$ and $\eta>0$ controls its contribution. Identity matrix $I_{|S|}$ provides a positive volume and ensures numerical stability when rollouts are highly correlated.

We adopt a determinant-based formulation for the team value because it measures the collective volume spanned by a response set rather than pairwise separations. This is essential for mitigating the limitations of additive diversity bonuses. Pairwise distances do not reflect team-level diversity, since they may remain large even when responses are organized into internally redundant clusters. In contrast, the determinant captures the spectral structure of the Gram matrix, increasing when responses contribute independent directions and saturating when additional rollouts are redundant with the existing set. Proposition~\ref{prop:spectral_team_value} formalizes these properties.
\begin{proposition} 
\label{prop:spectral_team_value}
Let $\lambda_1,\dots,\lambda_{|S|}$ be the eigenvalues of $L_S$. Then the team value can be expressed as:
\begin{equation}
    v(S)=\sum_{\ell=1}^{|S|}\log(1+\eta\lambda_\ell).
\end{equation}
\end{proposition}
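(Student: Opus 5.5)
The plan is to diagonalize $L_S$ and read off the determinant of $I_{|S|}+\eta L_S$ from its eigenvalues. First, I will note that $L_S$ is real symmetric positive semi-definite. It is the Gram matrix of the vectors $\{\tilde z_i\}_{i\in S}$: writing $Z_S$ for the matrix whose columns are $\tilde z_i$, we have $L_S = Z_S^\top Z_S$. Symmetry is immediate from $L_{ij}=\langle \tilde z_i,\tilde z_j\rangle = L_{ji}$, and for any $x$ we have $x^\top L_S x = \lVert Z_S x\rVert_2^2 \ge 0$. Hence, by the spectral theorem, there is an orthogonal matrix $U$ with $L_S = U\Lambda U^\top$, where $\Lambda=\mathrm{diag}(\lambda_1,\dots,\lambda_{|S|})$ and every $\lambda_\ell\ge 0$.

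Next, I will use the decomposition to rewrite the regularized matrix. Since $UU^\top = I_{|S|}$,
\begin{equation*}
I_{|S|}+\eta L_S = U\left(I_{|S|}+\eta\Lambda\right)U^\top .
\end{equation*}
Multiplicativity of the determinant, together with $\det U\det U^\top = 1$, then gives
\begin{equation*}
\det\left(I_{|S|}+\eta L_S\right)=\prod_{\ell=1}^{|S|}\left(1+\eta\lambda_\ell\right).
\end{equation*}

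Finally, I will take logarithms. Because $\eta>0$ and $\lambda_\ell\ge 0$, each factor satisfies $1+\eta\lambda_\ell\ge 1>0$. So every logarithm is well defined and the log of the product splits into the sum $\sum_\ell \log(1+\eta\lambda_\ell)$, which is exactly the claimed formula for $v(S)$.

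The only point needing care is this positivity step, which guarantees the logarithm is finite and that the definition in \eqref{eq:team_value} makes sense. It follows directly from the Gram-matrix structure of $L_S$, so I expect no real obstacle. As a remark, the formula also shows that $v(S)\ge 0$, and that directions with $\lambda_\ell = 0$ contribute nothing; this corresponds to redundant or zero-reward rollouts, since $r_i=0$ makes $\tilde z_i=0$.
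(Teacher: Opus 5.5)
Your proof is correct and follows the paper's argument essentially verbatim. In both, the Gram-matrix structure makes $L_S$ symmetric PSD, the spectral decomposition rewrites $I_{|S|}+\eta L_S$ as $U\,\mathrm{diag}(1+\eta\lambda_1,\ldots,1+\eta\lambda_{|S|})\,U^\top$, and the log of the determinant becomes the sum. The paper's proof also checks the consequences discussed after the statement: zero-reward rollouts are null players, $m$ orthogonal correct rollouts give $m\log(1+\eta)$, and $m$ identical ones give $\log(1+\eta m)$. These go beyond your closing remark but are not part of the proposition itself.
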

\vspace{-3mm}
This form shows how the value function balances correctness and diversity. Zero-reward rollouts do not contribute to the team value. Moreover, $m$ mutually orthogonal correct responses yield $v(S)=m\log(1+\eta)$, whereas $m$ identical correct responses yield $v(S)=\log(1+\eta m)$. Therefore, the value grows more rapidly when correct responses span independent semantic directions, while it saturates when correct responses are redundant. 
Thus, \textbf{diverse correct reasoning paths are assigned higher team value, whereas redundant correct paths yield diminishing returns.}

\subsubsection{Member Value Estimation}
\label{sec:credit}
The team value measures the collective coverage of a coalition, while policy optimization requires a per-rollout credit that reflects each member’s contribution to the group. This credit is derived in two steps: each rollout’s \textbf{marginal contribution} to a coalition is first computed, and these context-dependent contributions are then aggregated into a fair \textbf{cooperative credit} via Shapley values.

\noindent \textbf{Marginal contribution.} Given a rollout $i$ and coalition $S$ that does not contain $i$, the marginal contribution of $i$ is defined as the additional team value induced by adding $i$ to $S$: 
\begin{equation}
   \Delta_i(S)=v(S\cup\{i\})-v(S).
\label{eq:marginal_definition}
\end{equation}

\begin{theorem}
\label{thm:marginal_contribution}
Let $\tilde Z_S\in\mathbb{R}^{|S|\times d}$ denote the representation matrix of coalition $S$, whose rows are given by $\{\tilde z_j\}_{j\in S}$.
For any $i\notin S$, the marginal contribution can be calculated as:
\begin{equation}
    \Delta_i(S)
    =
    \log
    \left(
    1+
    \eta r_i^2
    \bar{z}_i^\top
    \left(
    I_d+\eta \tilde Z_S^\top \tilde Z_S
    \right)^{-1}
    \bar{z}_i
    \right),
    \label{eq:marginal_quality_novelty}
\end{equation}
where $r_i$ is the reward of i. The marginal contribution is bounded as $0\leq \Delta_i(S)\leq \log(1+\eta r_i^2).$
\end{theorem}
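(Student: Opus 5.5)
The plan is to express the marginal contribution as a ratio of determinants and then apply two standard matrix identities: the Schur complement and Sylvester's determinant identity (equivalently, push-through/Woodbury). Write $L_S=\tilde Z_S\tilde Z_S^\top$. Order $S\cup\{i\}$ with $i$ last, so that
\begin{equation*}
I+\eta L_{S\cup\{i\}}=\begin{pmatrix} I_{|S|}+\eta \tilde Z_S\tilde Z_S^\top & \eta \tilde Z_S\tilde z_i\\ \eta \tilde z_i^\top \tilde Z_S^\top & 1+\eta \tilde z_i^\top\tilde z_i\end{pmatrix}.
\end{equation*}
Let $M=I_{|S|}+\eta\tilde Z_S\tilde Z_S^\top$, which is positive definite. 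The Schur complement formula gives $\det(I+\eta L_{S\cup\{i\}})=\det M\cdot s$, where
\begin{equation*}
s=1+\eta\tilde z_i^\top\tilde z_i-\eta^2\tilde z_i^\top\tilde Z_S^\top M^{-1}\tilde Z_S\tilde z_i .
\end{equation*}
Hence $\Delta_i(S)=\log s$.

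Next, simplify $s$ with the push-through identity $\tilde Z_S^\top(I+\eta\tilde Z_S\tilde Z_S^\top)^{-1}\tilde Z_S=\tilde Z_S^\top\tilde Z_S(I_d+\eta\tilde Z_S^\top\tilde Z_S)^{-1}$. Writing $B=\tilde Z_S^\top\tilde Z_S$, this gives
\begin{equation*}
s=1+\eta\tilde z_i^\top\big(I_d-\eta B(I_d+\eta B)^{-1}\big)\tilde z_i .
\end{equation*}
Because $I_d-\eta B(I_d+\eta B)^{-1}=(I_d+\eta B)^{-1}$, it follows that $s=1+\eta\tilde z_i^\top(I_d+\eta B)^{-1}\tilde z_i$. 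Substituting $\tilde z_i=r_i\bar z_i$ produces the displayed formula. Alternatively, one can use Sylvester's identity to pass to $d\times d$ determinants, $\det(I+\eta L_S)=\det(I_d+\eta B)$, and then apply the matrix determinant lemma to the rank-one update $B+\tilde z_i\tilde z_i^\top$. This route is perhaps cleaner, and I would likely present it that way.

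For the bounds, set $A=(I_d+\eta B)^{-1}$. Since $B\succeq 0$, the eigenvalues of $I_d+\eta B$ are at least $1$, so $0\prec A\preceq I_d$. Therefore $0\le \bar z_i^\top A\bar z_i\le \|\bar z_i\|^2=1$. Because $\log(1+\eta r_i^2 x)$ is monotone in $x\in[0,1]$, we get $0\le\Delta_i(S)\le\log(1+\eta r_i^2)$. The case $S=\emptyset$ is covered by the convention $B=0$.

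The only delicate point is the matrix inversion step, specifically verifying the push-through identity and the simplification $I-\eta B(I+\eta B)^{-1}=(I+\eta B)^{-1}$. Both follow from the fact that $B$ commutes with $(I+\eta B)^{-1}$. I would present the Sylvester/determinant-lemma route to avoid that bookkeeping.
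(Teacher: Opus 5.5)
Your proposal is correct, and the route you say you would present (Sylvester's identity to pass to $\det(I_d+\eta\tilde Z_S^\top\tilde Z_S)$, the matrix determinant lemma for the rank-one update $\eta r_i^2\bar z_i\bar z_i^\top$, then $I_d+\eta\tilde Z_S^\top\tilde Z_S\succeq I_d$ for the bounds) is exactly the paper's proof. Your Schur-complement derivation is also valid; it gives the same expression the paper uses for its incremental Cholesky updates in the appendix. One small correction: the push-through step rests on $\tilde Z_S(I_d+\eta\tilde Z_S^\top\tilde Z_S)=(I_{|S|}+\eta\tilde Z_S\tilde Z_S^\top)\tilde Z_S$, not on $B$ commuting with $(I_d+\eta B)^{-1}$.
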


Here, $r_i^2$ gates the contribution and the quadratic form measures the novelty of $z_i$ after accounting for directions already covered by $S$. Theorem~\ref{thm:marginal_contribution} confirms that \textbf{a rollout is valuable only when it is both correct and semantically non-redundant.} Moreover, it is upper-bounded, ensuring that the credit assigned to any rollout remains \textbf{well-controlled during policy optimization.}

\vspace{-1.5mm}
\textbf{Cooperative credits.}
The marginal contribution $\Delta_i(S)$ is coalition-dependent, and thus does not provide a unique rollout-level credit by itself. A straightforward choice is the leave-one-out (LOO) score $\Delta_i([G]\setminus\{i\})$, which fixes $S$ to the rest of the full group.
But LOO suffers from mutual masking: when multiple rollouts instantiate the same valid reasoning direction, removing any single one has only a limited effect on the total volume. As a result, all such rollouts may receive small individual credits, despite collectively contributing a valuable reasoning direction.

To obtain a fair and context-independent credit for each rollout, we use the Shapley value~\cite{sundararajan2020many}, which attributes the team value by averaging each rollout’s marginal contribution across all possible coalition formation orders:
\begin{equation}
    \phi_i
    =
    \sum_{S\subseteq [G]\setminus\{i\}}
    \frac{|S|!(G-|S|-1)!}{G!}
    \left[
    v(S\cup\{i\})-v(S)
    \right].
    \label{eq:gcpo_shapley_all}
\end{equation}
The Shapley value provides a principled way to convert coalition-dependent marginal contributions into a coalition-independent cooperative credit and mitigate the risk of mutual masking.


\begin{theorem}
\label{thm:shapley_redundancy}
Consider a set $C=\{1,\ldots,M\}$ of $m$ verifier-correct rollouts with identical semantic embeddings, i.e., $\tilde z_i=u$ for all $i\in C$. 
Then the Shapley value of each rollout is
\begin{equation}
    \phi_i
    =
    \frac{1}{M}\log\left(1+\eta M\|u\|_2^2\right),
    \qquad i\in C.
    \label{eq:duplicate_shapley}
\end{equation}
\end{theorem}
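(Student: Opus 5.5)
The plan is to reduce the statement to two standard properties of the Shapley value, \emph{symmetry} and \emph{efficiency}. To do that, we first show that the team value restricted to $C$ depends only on coalition size. We read the statement as concerning the cooperative game on the player set $C$ itself, i.e.\ the group $[G]$ with $G=M$, so that every coalition in \eqref{eq:gcpo_shapley_all} is a subset of $C$. This reading is needed. If other rollouts with independent directions were present, the marginal contributions of members of $C$ would generally be altered, and the clean formula \eqref{eq:duplicate_shapley} would no longer be expected.

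\textbf{Step 1: $v$ on $C$ as a function of $|S|$.} Take any $S\subseteq C$ with $|S|=k\ge 1$. Since $\tilde z_j=u$ for all $j\in S$, the Gram matrix is $L_S=\|u\|_2^2\,\mathbf{1}_k\mathbf{1}_k^\top$. This matrix has rank at most one. Its eigenvalues are $k\|u\|_2^2$ (eigenvector $\mathbf{1}_k$) and $0$ with multiplicity $k-1$. Proposition~\ref{prop:spectral_team_value} then gives
\begin{equation*}
v(S)=\log\left(1+\eta k\|u\|_2^2\right)+(k-1)\log 1=\log\left(1+\eta k\|u\|_2^2\right).
\end{equation*}
For $S=\emptyset$, the determinant of the empty matrix is $1$, so $v(\emptyset)=0$, which is consistent with the same formula at $k=0$. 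Alternatively, one can iterate Theorem~\ref{thm:marginal_contribution}, using the Sherman--Morrison identity for $(I_d+\eta k\,uu^\top)^{-1}$.

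\textbf{Step 2: symmetry.} For any $i,j\in C$, swapping $i$ and $j$ preserves coalition sizes. Hence $v(S\cup\{i\})=v(S\cup\{j\})$ for every $S\subseteq C\setminus\{i,j\}$. In \eqref{eq:gcpo_shapley_all} this gives a bijection between the terms for $\phi_i$ and those for $\phi_j$, matching coalitions of equal size and therefore equal weight, so $\phi_i=\phi_j$.

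\textbf{Step 3: efficiency.} Write $\phi_i$ as the average over all $M!$ orderings of $C$ of $i$'s marginal contribution when it joins its predecessors. For a fixed ordering, the marginal contributions summed over all $i$ telescope to $v(C)-v(\emptyset)=v(C)$. Hence $\sum_{i\in C}\phi_i=\log(1+\eta M\|u\|_2^2)$, and dividing by $M$ yields \eqref{eq:duplicate_shapley}.

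As a direct check, $\Delta_i(S)$ depends only on $k=|S|$ and equals $\log\frac{1+\eta(k+1)\|u\|^2}{1+\eta k\|u\|^2}$. The Shapley weights over subsets of size $k$ sum to $1/M$, so the sum over $k$ telescopes to the same answer.

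The main obstacle is interpretive rather than computational. We must fix that the game is played on $C$ alone, and handle the empty coalition convention $v(\emptyset)=\log\det I_0=0$. Both should be stated explicitly at the start of the proof.
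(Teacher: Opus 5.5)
Your proposal is correct and follows the paper's proof. Both arguments compute $v(S)=\log(1+\eta|S|\|u\|_2^2)$ from the rank-one Gram matrix $\|u\|_2^2\mathbf{1}\mathbf{1}^\top$ and then conclude via the symmetry and efficiency axioms of the Shapley value. Restricting the game to $C$ explicitly is a sound addition: the paper uses this implicitly when it writes $\sum_{i\in C}\phi_i=v(C)$, and it can be relaxed to allow extra zero-reward rollouts, since these are null players. Your direct size-by-size telescoping check is also correct.
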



Theorem~\ref{thm:shapley_redundancy} highlights a key distinction between GCPO and LOO-based credit assignment. LOO measures the decrease in team value after removing a rollout from the full group. As a result, when multiple rollouts realize the same valid reasoning path, the remaining rollouts can compensate for the removed one, causing LOO to underestimate their shared contribution. 
\textbf{Shapley instead divides the team value equally among all rollouts that realize the same reasoning path.}

Since rollouts with zero reward have zero marginal contribution according to Theorem~\ref{thm:marginal_contribution}, they do not affect the Shapley allocation.
Let $\mathcal{P}=\{i\in [G]\mid r_i > 0\}$ be the set of positive-reward rollouts. In practice, we therefore efficiently compute Shapley values only over $\mathcal{P}$:
\begin{equation}
    \phi_i
    =\sum_{S\subseteq \mathcal{P}\setminus\{i\}}
    \frac{|S|!(|\mathcal{P}|-|S|-1)!}{|\mathcal{P}|!}
    \left[
    v(S\cup\{i\})-v(S)
    \right].
    \label{eq:gcpo_shapley_correct}
\end{equation}

\subsubsection{Reward Redistribution}
\label{sec:redistribution}

After computing cooperative values for individual rollouts, GCPO converts them into training signals by reward redistribution. Rather than adding these values as extra bonuses, which would increase the total reward mass and may encourage reward hacking, GCPO keeps the total batch reward fixed and reallocates it to rollouts in proportion to their cooperative credits. Specifically, let $R_{\mathrm{batch}}
    =
    \sum_{i=1}^{G} r_i$, the final reward of GCPO is defined as:
\begin{equation}
    \tilde r_i
    =
    \begin{cases}
    R_{\mathrm{batch}}
    \cdot
    \dfrac{\phi_i}{\sum_{j\in\mathcal{P}}\phi_j},
    & r_i>0, \\[2.2ex]
    0,
    & r_i=0.
    \end{cases}
    \label{eq:gcpo_reward}
\end{equation}
Since $\sum_{i=1}^G \tilde{r}_i = R_{\mathrm{batch}} = \sum_{i=1}^G r_i$, the total reward in the batch remains unchanged. This conservation property ensures that GCPO does not introduce additional reward mass: it preserves the task-level reward budget while reallocating credit among correct rollouts, thereby reducing the risk of reward hacking associated with additive diversity bonuses.

\begin{proposition}
\label{prop:reward_budget}
For a query $q$, let $B=\{o_i\}_{i=1}^G \sim \pi(\cdot\mid q)^G$ be a group of i.i.d. responses with verifier rewards $r_i=r(o_i)$. Then the expected batch-average reward of GCPO is preserved: $J^{\Sigma}_{\mathrm{GCPO}}(\pi;q)
=
\mathbb{E}_{B\sim\pi^G}
\left[
\frac{1}{G}\sum_{i=1}^G \tilde r_i
\right]
=
\mathbb{E}_{o\sim\pi(\cdot\mid q)}[r(o)]
=
J_{\mathrm{obj}}(\pi;q).$
In contrast, for an additive diversity-shaped reward
$r_i^{\mathrm{add}}=r_i+\lambda d_i$, we have: $J_{\mathrm{add}}(\pi;q)
=
J_{\mathrm{obj}}(\pi;q)
+
\lambda\,
\mathbb{E}_{B\sim\pi^G}
\left[
\frac{1}{G}\sum_{i=1}^G d_i
\right].$
\end{proposition}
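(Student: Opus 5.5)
The plan is to prove the conservation identity pointwise for every realized group $B$ and then take expectations. The additive case follows from linearity of expectation.

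\textbf{Step 1: pointwise conservation.} Fix a realization $B=\{o_i\}_{i=1}^G$ and split into two cases according to $\mathcal{P}$.

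If $\mathcal{P}=\emptyset$, then every $r_i=0$. Hence $R_{\mathrm{batch}}=0$, and by \eqref{eq:gcpo_reward} every $\tilde r_i=0$, so $\sum_i\tilde r_i=0=\sum_i r_i$. For this degenerate case the convention $\tilde r_i=0$ has to be adopted explicitly, since the ratio in \eqref{eq:gcpo_reward} would read $0/0$.

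If $\mathcal{P}\neq\emptyset$, I need $\sum_{j\in\mathcal{P}}\phi_j>0$ so that the normalization is well defined. By Theorem~\ref{thm:marginal_contribution} every marginal contribution is nonnegative, so each $\phi_j\ge 0$. By the efficiency axiom of the Shapley value (telescoping over orderings), $\sum_{j\in\mathcal{P}}\phi_j=v(\mathcal{P})-v(\emptyset)=v(\mathcal{P})$. Taking any $i\in\mathcal{P}$ with $\|\bar z_i\|=1$ and $r_i>0$, we get $\Delta_i(\emptyset)=\log(1+\eta r_i^2)>0$. By monotonicity this gives $v(\mathcal{P})\ge v(\{i\})>0$. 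Summing \eqref{eq:gcpo_reward} over $i$ then yields $\sum_i\tilde r_i=R_{\mathrm{batch}}\cdot\frac{\sum_{j\in\mathcal P}\phi_j}{\sum_{j\in\mathcal P}\phi_j}=\sum_i r_i$.

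\textbf{Step 2: expectation.} From Step 1, $\frac1G\sum_i\tilde r_i=\frac1G\sum_i r(o_i)$ holds almost surely. Taking $\mathbb{E}_{B\sim\pi^G}$ and using that each marginal $o_i\sim\pi(\cdot\mid q)$ under the i.i.d. assumption gives $\frac1G\sum_i\mathbb{E}[r(o_i)]=\mathbb{E}_{o\sim\pi}[r(o)]=J_{\mathrm{obj}}$.

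\textbf{Step 3: additive case.} This is direct linearity:
\[
\mathbb{E}\Bigl[\tfrac1G\textstyle\sum_i(r_i+\lambda d_i)\Bigr]=J_{\mathrm{obj}}+\lambda\,\mathbb{E}\Bigl[\tfrac1G\textstyle\sum_i d_i\Bigr],
\]
where the first term uses the same marginal argument as Step 2. It may also be worth remarking that the extra term is generally nonzero and depends on $\pi$, which is the contrast the proposition draws.

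\textbf{Main obstacle.} The only real care point is well-definedness of the normalization in Step 1: ruling out $\sum_{j\in\mathcal P}\phi_j=0$ when $\mathcal P\neq\emptyset$, and handling the all-zero group by convention. Everything else is bookkeeping.
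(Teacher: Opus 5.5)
Your proof is correct and follows the paper's route: show $\sum_i\tilde r_i=\sum_i r_i$ for every realized group, then take expectations using that each $o_i$ has marginal $\pi(\cdot\mid q)$; the additive case is plain linearity, which the paper leaves implicit. Your check that $\sum_{j\in\mathcal P}\phi_j=v(\mathcal P)\ge v(\{i\})=\log(1+\eta r_i^2)>0$ whenever $\mathcal P\neq\emptyset$ fills in a well-definedness point the paper never addresses, though for $\mathcal P=\emptyset$ no $0/0$ convention is actually needed, since every rollout already falls in the $r_i=0$ branch of \eqref{eq:gcpo_reward}.
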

Proposition~\ref{prop:reward_budget} distinguishes GCPO from additive diversity shaping at the reward-budget level. Additive shaping increases the expected reward by an external diversity term, which may favor diversity even when it is not aligned with correctness. In contrast, GCPO conserves the group-level reward and only reallocates it within the response group. Thus, diversity affects learning only through credit assigned to correct, non-redundant rollouts, rather than through additional reward mass.
\subsubsection{GCPO Objective}
\label{sec:objective}

GCPO can be seamlessly incorporated into GRPO by modifying only the reward used for advantage estimation. Specifically, we replace the original reward  $r_i$ with the redistributed reward $\tilde{r}_i$, leading to $A_i^{GCPO} = (\tilde{r}_i - \mu_{\tilde{r}}) / (\sigma_{\tilde{r}} + \epsilon_{num})$. The policy is optimized using standard clipped surrogate loss:
\begin{equation}
\label{eq:total_loss}
  \mathcal{L}_{GCPO}(\theta) = \mathbb{E}_{x \sim \mathcal{D}} \left[ \frac{1}{G} \sum_{i=1}^G \frac{1}{T_i} \sum_{t=1}^{T_i} \min \left( \rho_{i,t} A_i^{GCPO}, \text{clip}(\rho_{i,t}, 1-\epsilon_c, 1+\epsilon_c) A_i^{GCPO} \right) \right] - \beta \mathbb{D}_{KL} \nonumber
\end{equation}

The encoder $f_\phi$, Shapley credits $\phi_i$, and redistributed rewards $\tilde r_i$ are treated as stop-gradient quantities during policy optimization. Thus, GCPO modifies the training signal only through the advantages induced by reward redistribution. We make this effect explicit through the following decomposition.
\begin{proposition}
\label{prop:gradient_routing}
Let $\Delta A_i \triangleq A_i^{GCPO} - A_i^{GRPO}$ denote the advantage correction introduced by GCPO for the $i$-th rollout. The unclipped policy-gradient surrogate admits the decomposition
\begin{equation}
    \nabla_\theta \mathcal{L}_{GCPO}
    =
    \nabla_\theta \mathcal{L}_{GRPO}
    +
    \mathbb{E}_{x \sim \mathcal{D}}
    \left[
    \frac{1}{G}
    \sum_{i=1}^G
    \nabla_\theta \log \pi_\theta(o_i \mid x)
    \cdot \Delta A_i
    \right].
\end{equation}
\end{proposition}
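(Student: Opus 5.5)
The plan is to work with the unclipped surrogate and rely on linearity. Setting the clipping aside ($\rho_{i,t}$ inside the trust region, or equivalently taking the PPO surrogate at $\theta=\theta_{\text{old}}$), the GCPO objective is
\begin{equation*}
\mathcal{L}_{GCPO}(\theta)=\mathbb{E}_{x\sim\mathcal{D}}\Big[\frac{1}{G}\sum_{i=1}^G \frac{1}{T_i}\sum_{t=1}^{T_i}\rho_{i,t}\,A_i^{GCPO}\Big]-\beta\,\mathbb{D}_{KL},
\end{equation*}
and $\mathcal{L}_{GRPO}$ is the same expression with $A_i^{GRPO}$ in place of $A_i^{GCPO}$.

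The first step is to justify treating the advantages as constants in $\theta$. As stated in \S\ref{sec:objective}, the encoder $f_\phi$, the Shapley credits $\phi_i$, and the redistributed rewards $\tilde r_i$ are all stop-gradient quantities. The normalized advantage $A_i^{GCPO}$ is built only from the $\tilde r_j$, so it is also stop-gradient, and likewise $A_i^{GRPO}$ is built only from the $r_j$. The rollouts are sampled from $\pi_{\theta_{\text{old}}}$, so the sampling distribution does not depend on $\theta$. The gradient therefore acts only on $\rho_{i,t}$ and on the KL term.

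The second step is to substitute $A_i^{GCPO}=A_i^{GRPO}+\Delta A_i$ into the surrogate. Linearity in the advantage splits $\mathcal{L}_{GCPO}$ into $\mathcal{L}_{GRPO}$ plus a correction term $\mathbb{E}\big[\frac{1}{G}\sum_i \frac{1}{T_i}\sum_t \rho_{i,t}\Delta A_i\big]$. The KL penalty is identical in both objectives, so it goes entirely into $\nabla_\theta\mathcal{L}_{GRPO}$.

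The third step is to differentiate the correction term. At $\theta=\theta_{\text{old}}$ we have $\nabla_\theta\rho_{i,t}=\nabla_\theta\log\pi_\theta(o_{i,t}\mid x,o_{i,<t})$, and summing over $t$ gives $\nabla_\theta\log\pi_\theta(o_i\mid x)$ by the autoregressive factorization.

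The main obstacle is the $1/T_i$ length normalization. With it, the token sum yields $\frac{1}{T_i}\nabla_\theta\log\pi_\theta(o_i\mid x)$ rather than the sequence score $\nabla_\theta\log\pi_\theta(o_i\mid x)$ appearing in the proposition. I would handle this in one of two ways. The first is to state the result for the sequence-level surrogate, in which the $1/T_i$ factor is absorbed; the proposition's $\log\pi_\theta(o_i\mid x)$ suggests this is the intended reading. The second is to note that the identity holds verbatim once $\nabla\log\pi$ is read as the length-normalized score.

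Either way, the decomposition itself is exact: it follows purely from the linearity of the unclipped surrogate in the stop-gradient advantages.
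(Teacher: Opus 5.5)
Your proposal is correct and uses the same argument as the paper. The paper takes the unclipped surrogate to be $\mathbb{E}_{x\sim\mathcal{D}}\bigl[\frac{1}{G}\sum_{i}\log\pi_\theta(o_i\mid x)\,A_i^{GCPO}\bigr]$ with stop-gradient advantages (your sequence-level reading, with no ratio, no $1/T_i$, and no KL term), substitutes $A_i^{GCPO}=A_i^{GRPO}+\Delta A_i$, and splits the gradient by linearity. Its preliminary step showing $\sum_i\Delta A_i=0$ is not needed for the identity, and your extra bookkeeping (the ratio at $\theta_{\text{old}}$, the KL term cancelling, the $1/T_i$ mismatch with the implemented loss) is accurate and a bit more careful than the paper's treatment.
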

Proposition~\ref{prop:gradient_routing} reveals GCPO's optimization signal. The first term recovers the standard GRPO reward update, while the second term acts as an internal credit-routing correction induced by reward redistribution. This correction favors semantically complementary correct rollouts over redundant ones, thereby routing exploration within the valid solution space without promoting incorrect rollouts.

\vspace{-1mm}
\textbf{Implementation.}
In practice, GCPO incurs only a modest computational overhead.
Although the coupled kernel is defined over all rollouts, rollouts with zero reward have zero contribution and can be excluded from Shapley computation. Thus, Shapley values can be computed only over $\mathcal{P}$ using Eq.~\eqref{eq:gcpo_shapley_correct}. 
For the rollout group sizes $G\le8$, exact enumeration over correct subsets is feasible. 
For larger groups, we estimate Shapley value by Monte Carlo permutations detailed in Appendix~\ref{app:scalable-shapley}.

\section{Experiments}
\label{sec:experiments}

\vspace{-0.5em}
\subsection{Experimental Setup}

\label{sec:setup}
\vspace{-1.5mm}
\noindent \textbf{Implementation Details.}
We conduct all experiments on the filtered DAPO-Math-17K~\cite{openr1}, which removes duplicated samples, and train base models with increasing capacity: Qwen3-1.7B and 4B~\cite{yang2025qwen3}. We adopt all-MiniLM-L6-v2~\cite{reimers2019sentence} as a lightweight sentence encoder for extracting semantic response embeddings. All methods use the same training configuration: batch size 256, $G=8$ rollouts during policy optimization, learning rate $1e^{-6}$, and maximum response length 8192. 

\vspace{-1.5mm}
\noindent \textbf{Evaluation Setting.}
Evaluation is conducted on a diverse set of benchmarks spanning math reasoning and broader reasoning domains. For in-domain evaluation, we use AIME2024~\cite{aime24}, AIME2025, MATH500~\cite{cobbe2021training}, Minerva Math~\cite{lewkowycz2022solving}, and Olympiad~\cite{he2024olympiadbench}. For out-of-domain generalization, we evaluate on ARC-Challenge~\cite{clark2018think}, GPQA~\cite{rein2023gpqa}, and MMLU-Pro~\cite{wang2024mmlu}, covering multi-domain reasoning. For smaller test sets(AIME 24/25), we report \textbf{Avg@16}, computed by averaging correctness over 16 independent rollouts at temperature 0.7, and \textbf{Pass@1}, computed from a single rollout per problem for others. In addition, we evaluate \textbf{Pass@k} for $k \in \{2, 4, \ldots, 64\}$ by sampling 64 independent rollouts per problem. We compare GCPO against existing sota optimization algorithms like GRPO~\cite{shao2024deepseekmath}, DAPO~\cite{yu2025dapo}, DIVER~\cite{hu2025diversity}, DQO~\cite{chen2025post}, as well as the corresponding base models.

\begin{table*}[t]
\vskip -0.6em
\caption{Comparison on in and out-of-domain tasks. \textbf{Bold}/\underline{underline} denote best/second-best.
}
\vspace{-2mm}
\label{tab:main_results}
\begin{center}
\setlength{\tabcolsep}{2pt}
\renewcommand{\arraystretch}{0.95}
\resizebox{\textwidth}{!}{%
\begin{tabular}{l ccccc >{\columncolor{yellow!12}}c | ccc >{\columncolor{cyan!10}}c}
\toprule
\multirow{2}{*}{\textbf{Methods}}
& \multicolumn{6}{c|}{\textbf{In-Domain Performance}}
& \multicolumn{4}{c}{\textbf{Out-of-Domain Performance}} \\
\cmidrule(lr){2-7} \cmidrule(lr){8-11}
& \textbf{AIME24} & \textbf{AIME25} & \textbf{MATH500} & \textbf{Minerva} & \textbf{Olympiad} & \textbf{Avg.}
& \textbf{ARC-c} & \textbf{GPQA} & \textbf{MMLU-Pro} & \textbf{Avg.} \\
\midrule
Qwen3-1.7B(Base) & 13.3 & 10.0 & 57.6 & 17.3 & 23.7 & 24.4 & 54.1 & 20.2 & 27.5 & 33.9 \\
\midrule
GRPO~\cite{guo2025deepseek}        & 16.7 & 20.0 & 57.4 & 18.4 & 29.5 & 28.4 & 75.6 & 24.3 & 38.5 & 46.1 \\
DAPO~\cite{yu2025dapo}        & 20.0 & 10.0 & \underline{63.8} & 21.3 & 32.9 & 29.6 & \textbf{79.4} & 28.2 & 41.0 & 49.5 \\
Div-R1~\cite{yao2025diversity}  & 22.1 & 15.3 & 60.2 & 19.5 & 31.2 & 29.7 & 77.8 & 25.1 & 39.9 & 47.6 \\
DIVER~\cite{hu2025diversity}       & 25.4 & 17.6 & 61.5 & 20.8 & 33.5 & 31.8 & 78.3 & \underline{30.2} & \underline{43.8} & \underline{50.8} \\
DQO~\cite{chen2025post}          & \underline{27.5} & \underline{18.4} & 62.3 & \underline{22.1} & \underline{34.2} & \underline{32.9} & 76.4 & 28.3 & 41.5 & 48.7 \\
\midrule
\rowcolor{GCPOgreen}
\textbf{GCPO} & \textbf{30.7} & \textbf{21.2} & \textbf{64.7} & \textbf{23.5} & \textbf{36.2} & \textbf{35.3} & \underline{79.1} & \textbf{31.8} & \textbf{45.2} & \textbf{52.0} \\
\midrule
\midrule
Qwen3-4B(Base)   & 13.3 & 16.7 & 64.8 & 26.1 & 32.8 & 30.7 & 66.9 & 26.3 & 30.9 & 41.4 \\
\midrule
GRPO~\cite{guo2025deepseek}        & 36.7 & 33.3 & 64.0 & 25.0 & 37.5 & 39.3 & 84.6 & 44.4 & 60.1 & 63.0 \\
DAPO~\cite{yu2025dapo}        & 33.3 & 40.0 & 59.0 & 25.7 & 36.7 & 38.9 & \underline{89.6} & 46.0 & \underline{62.8} & 66.1 \\
Div-R1~\cite{yao2025diversity} & 38.5 & 42.5 & 61.5 & 26.2 & 37.9 & 41.3 & 86.9 & 45.2 & 61.5 & 64.7 \\
DIVER~\cite{hu2025diversity}       & 42.3 & 44.2 & 62.8 & \underline{26.4} & 38.5 & 42.8 & 88.4 & \textbf{48.5} & 62.0 & \underline{66.3} \\
DQO~\cite{chen2025post}         & \underline{46.8} & \underline{46.5} & \underline{63.7} & 26.0 & \underline{39.2} & \underline{44.4} & 86.5 & 47.2 & 59.2 & 64.3 \\
\midrule
\rowcolor{GCPOgreen}
\textbf{GCPO} & \textbf{51.7} & \textbf{49.0} & \textbf{66.4} & \textbf{26.8} & \textbf{40.4} & \textbf{46.9} & \textbf{91.4} & \underline{47.5} & \textbf{63.8} & \textbf{67.6} \\
\bottomrule
\end{tabular}
}
\end{center}
\vskip -0.8em
\end{table*}

\vskip -0.4em
\subsection{Main Results}
\label{sec:main_results}
\vskip -0.4em
\noindent \textbf{Comparison with RLVR Baselines.}
Table~\ref{tab:main_results} reports results on five in-domain reasoning benchmarks and three out-of-domain reasoning benchmarks across two model scales.
GCPO consistently improves Pass@1 over standard RLVR baselines.
On Qwen3-1.7B, GCPO achieves an in-domain average of 35.3, with large gains on challenging benchmarks such as AIME24 and Olympiad.
The trend remains at the larger scale: on Qwen3-4B, GCPO reaches 46.9, outperforming DAPO by +8.0 points.
These results suggest that GCPO improves accuracy while maintaining broader coverage of reasoning paths.

\noindent \textbf{Comparison with Diversity-Aware Methods.}
GCPO also outperforms diversity-aware baselines across most evaluated settings.
On Qwen3-1.7B, GCPO improves the in-domain average from DQO's 32.9 to 35.3, and raises the OOD average to 52.0. On Qwen3-4B, GCPO further improves the in-domain average over DQO by +2.5 points and achieves the best OOD average of 67.6. These gains suggest that treating diversity as cooperative credit assignment is more effective than additive diversity, especially on harder benchmarks where redundant patterns can limit exploration.

\noindent \textbf{Out-of-Domain Generalization.} As shown in Table~\ref{tab:main_results}, GCPO achieves the highest average OOD accuracy across both model scales. The performance gains are particularly striking on knowledge-intensive benchmarks: On GPQA, GCPO outperforms DAPO by 3.6 points and DIVER by 1.6 points on Qwen3-1.7B. This result demonstrates that the cooperative mechanism effectively mitigates task-specific pattern collapse and cultivates a robust reasoning capability that generalizes well beyond the primary mathematical optimization objective.

\begin{figure*}[t]
    \begin{center}
    \includegraphics[width=0.98\textwidth,height=0.3\textwidth]{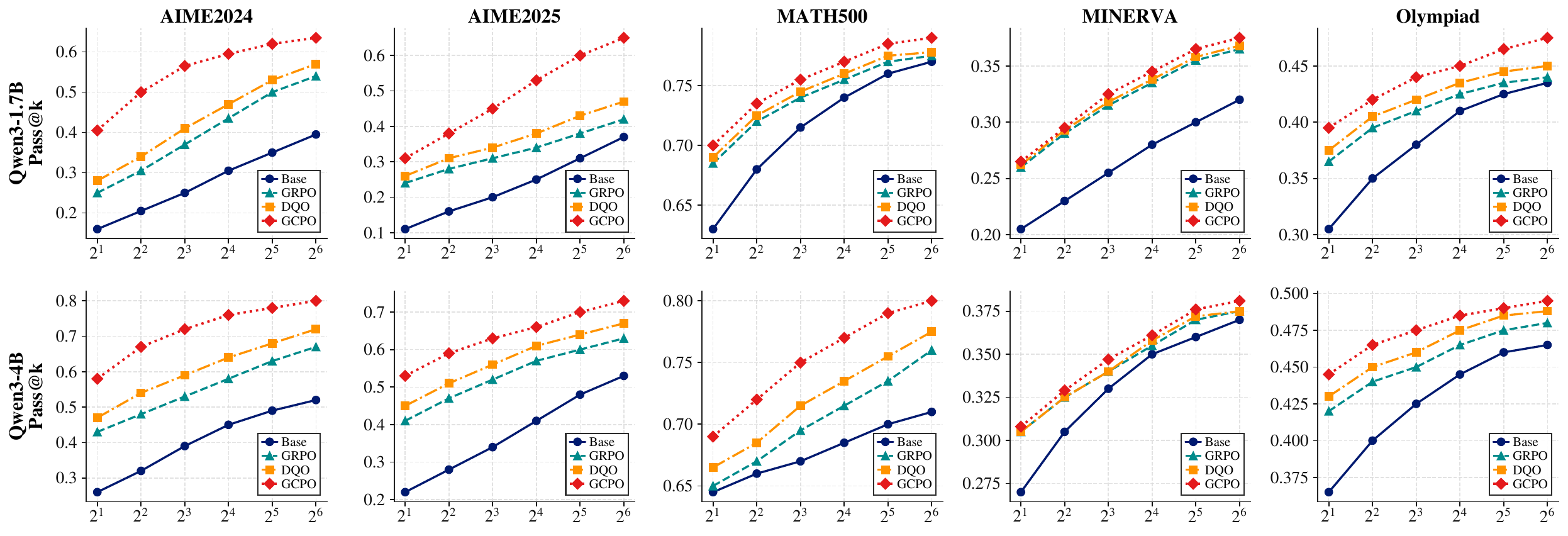}
    \vskip -0.4em
    \caption{
    Pass@k performance across five benchmarks for both Qwen3-1.7B and Qwen3-4B. Base models serve as backbones. GCPO consistently outperforms other models across all values of $k$.
    }
    \label{fig:passk}
    \end{center}
    \vskip -1em
\end{figure*}

\vskip -0.4em
\subsection{Diversity Analysis}
\label{sec:diversity_analysis}
\vskip -0.4em
\vspace{-1mm}
\noindent \textbf{Reasoning Capacity Upper Bound.} 
We evaluate Pass@$k$, the probability of obtaining at least one correct solution from $k$ independent samples, as an empirical measure of the model's solution discovery capacity. 
As shown in Figure~\ref{fig:passk}, GCPO consistently outperforms GRPO, DIVER, and DQO across $k$ at both model scales. 
On challenging benchmarks such as AIME2024 and AIME2025, the performance gap increases with larger $k$, suggesting that GCPO expands the coverage of team solution paths rather than only improving individual samples. 
On MATH500, where baseline accuracy is already high, GCPO still provides stable gains across the full $k$ range, indicating that its benefit is not limited to low-accuracy regimes. 
Overall, these results show that GCPO improves exploratory capacity while maintaining solution quality.

\vspace{-1.5mm}
\noindent \textbf{Multi-dimensional Diversity Characterization.} 
Figure~\ref{fig:diversity}(a) evaluates solution diversity on MATH500 using five metrics: self-ROUGE, self-BLEU, Distinct-1, LLM-as-Judge, and Pass@16, with details provided in Appendix~\ref{app:implementation_details}. 
GCPO shows consistent improvements across all metrics. 
The lexical-level metrics indicate reduced surface redundancy, while LLM-as-Judge and Pass@16 capture semantic reasoning diversity and valid solution coverage. 
Their joint improvement suggests that the determinantal objective encourages meaningful diversity among correct solutions, rather than merely increasing superficial token-level variation.
\begin{figure*}[t]
    \begin{center}
    \includegraphics[width=0.9\textwidth]{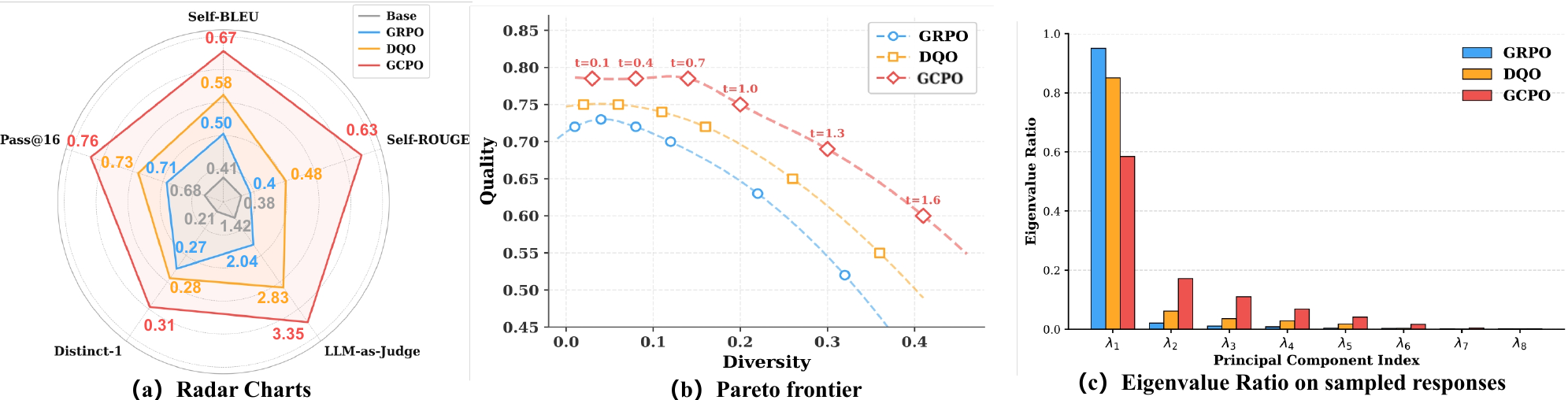}
    \vskip -0.4em
    \caption{
    \textbf{Diversity analysis.} (a) Radar chart comparing five diversity metrics across methods. (b) Pareto frontier at varying sampling temperatures. (c) Eigenvalue Ratio of sampled responses.
    }
    \label{fig:diversity}
    \end{center}
    \vskip -1em
\end{figure*}

\begin{figure*}[t]
    \begin{center}
    \includegraphics[width=0.9\textwidth,height=0.15\textwidth]{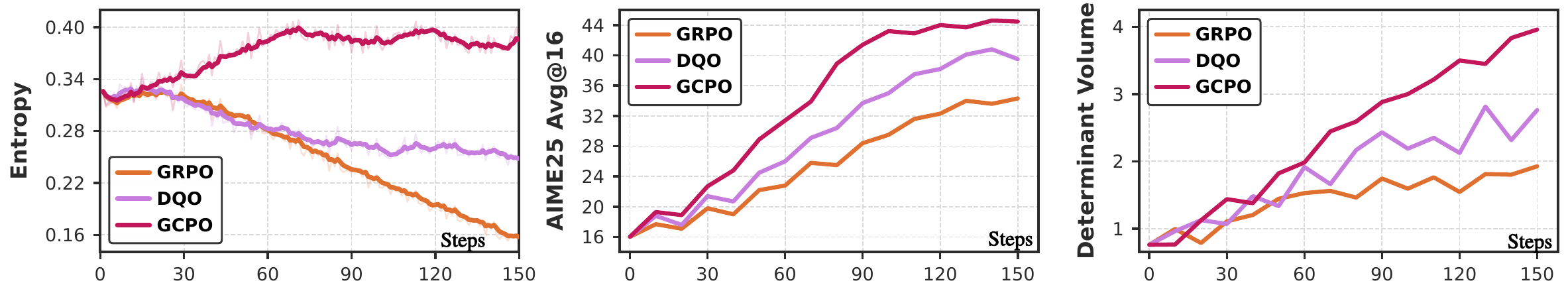}
    \vskip -0.6em
    \caption{
    \textbf{Training dynamics on Qwen3-4B.} From left to right: policy entropy, AIME2025 Avg@16, and the sample rollouts' determinantal team value in Eq.~\eqref{eq:team_value}.
    }
    \label{fig:training_dynamics}
    \end{center}
    \vskip -1.8em
\end{figure*}

\vspace{-1mm}
\noindent \textbf{Quality-Diversity Trade-off.}
Figure~\ref{fig:diversity}(b) studies the trade-off between reasoning quality and solution diversity by varying the sampling temperature $t$, where quality is measured by Pass@1 and diversity by LLM-as-Judge.
As expected, higher temperatures generally increase diversity at the cost of accuracy.
Across this trade-off, GCPO consistently lies closer to the upper-right region than GRPO and DQO, achieving higher accuracy at comparable diversity levels.
This suggests that GCPO encourages diversity that is better aligned with correctness, rather than relying on high-temperature sampling noise.
The result is consistent with our cooperative reward redistribution, which promotes non-redundant valid reasoning paths while preserving the reward budget.

\vspace{-1mm}
\noindent \textbf{Diversity Collapse Analysis.} 
Here, we explore the mechanism behind diversity collapse by eigenvalue analysis. Figure~\ref{fig:diversity}(c) reports the eigenspectrum of the response Gram matrix. 
When sampled responses concentrate on a single reasoning mode, the leading eigenvalue dominates and the remaining spectrum decays rapidly. 
GRPO exhibits this pattern, indicating that its responses are concentrated in a narrow semantic subspace. 
In contrast, GCPO has a lower leading eigenvalue ratio and a flatter spectral profile, suggesting that its responses span more independent semantic directions. This observation is consistent with the team value in Eq.~\eqref{eq:team_value}, which favors correct responses that expand the covered reasoning subspace. To further understand how such diversity collapse emerges during optimization, Figure~\ref{fig:training_dynamics} tracks policy entropy, Avg@16, and determinantal volume over training. 
GRPO and DQO quickly lose exploration capacity: their entropy decreases or plateaus, and their determinantal volume saturates early, indicating that additive bonus mostly reinforce already covered reasoning patterns. 
In contrast, GCPO maintains higher but controlled entropy, while its determinantal volume continues to grow throughout training. 
This suggests that GCPO sustains exploration over non-redundant correct reasoning paths. 
The simultaneous increase in Avg@16 further shows that this expanded coverage translates into improved reasoning performance rather than superficial variation.
\begin{figure}[t]
    \begin{center}
    \includegraphics[width=0.83\textwidth]{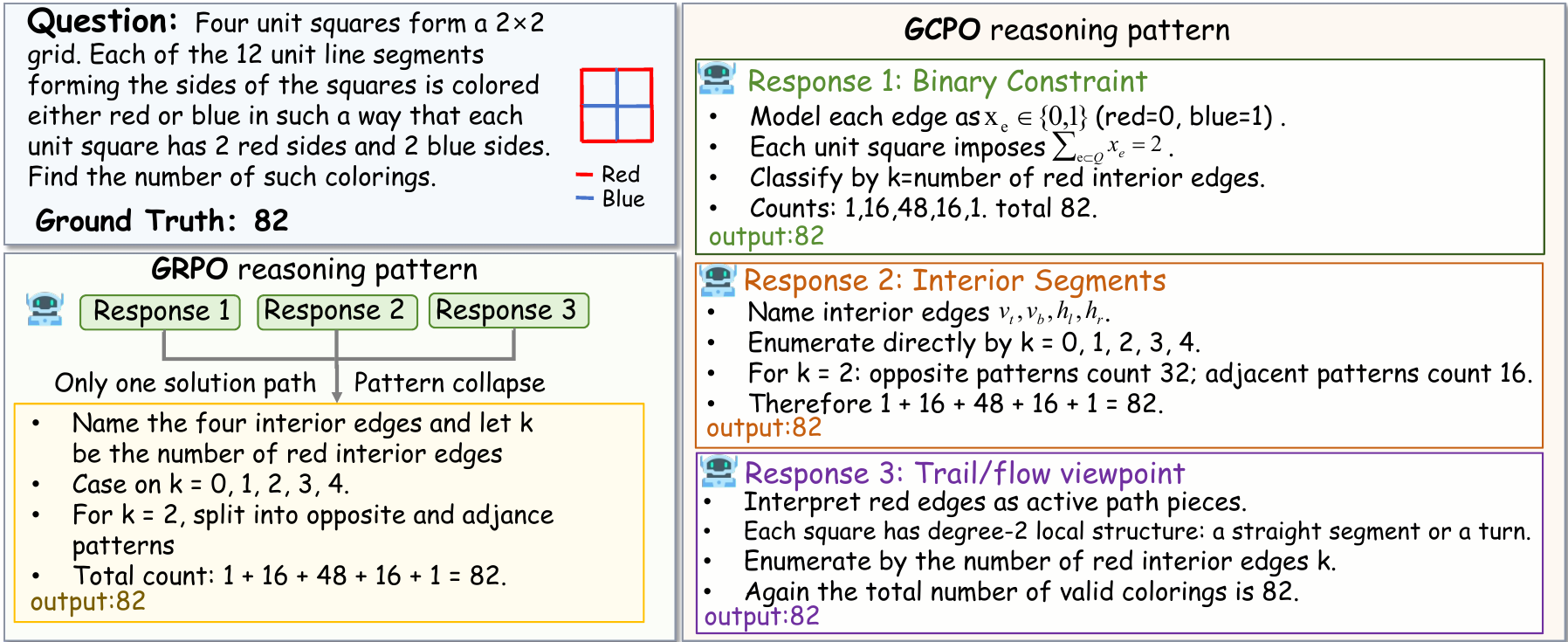}
    \vskip -0.6em
    \caption{
    Qualitative Diversity Analysis of GRPO and GCPO reasoning patterns.
    }
    \label{fig:keshihua}
    \end{center}
    \vskip -1.5em
\end{figure}

\vspace{-1mm}
\noindent \textbf{Qualiative Analysis.} 
Figure~\ref{fig:keshihua} visualizes the reasoning strategies discovered by different methods. Solutions generated by GRPO exhibit pattern collapse, narrowly clustering around standard, low-complexity algebraic solution. In contrast, GCPO expands the solution space along both the complexity and insight axes. It successfully navigates towards high-insight reasoning paths, such as trail/flow viewpoints, which remain unexplored by GRPO, demonstrating that GCPO constructs a richer, multi-modal reasoning model. Please refer to Appendix~\ref{app:case_study} for more qualitative results.

\begin{table}[t]
  \centering
  \caption{Ablation study on the core components of GCPO.  \textbf{Bold} indicates the best performance.
  }
  \label{tab:component_ablation}
  \vspace{-1mm}
  \small
  \setlength{\tabcolsep}{4pt}
  \renewcommand{\arraystretch}{0.9}
  \begin{tabular}{l rr rr}
    \toprule
    \multirow{2}{*}{\textbf{Model Variant}} 
      & \multicolumn{2}{c}{\textbf{Quality}} 
      & \multicolumn{2}{c}{\textbf{Diversity}} \\
    \cmidrule(lr){2-3} \cmidrule(lr){4-5}
      & Pass@1 (\%) & Avg@16 & Pass@16 (\%) & LLM-as-Judge \\ 
    \midrule
    \rowcolor{gray!15}
    \textbf{GCPO} 
      & \textbf{49.02} & \textbf{45.46} & \textbf{66.47} & \textbf{2.58} \\
    \midrule
    w/o Quality-Diversity Coupling 
      & 43.60 & 39.95 & 60.73 & 2.47 \\
    w/o DPP Volume 
      & 46.95 & 43.25 & 62.84 & 1.92 \\
   w/o Shapley Value & 48.23 & 44.50 & 64.05 & 2.17 \\
    Additive Bonus 
      & 48.17 & 44.27 & 64.79 & 2.33 \\
    \bottomrule
  \end{tabular}
  \vskip -1.5em
\end{table}

\vskip -0.4em
\subsection{Ablation Studies}
\vskip -0.4em

Table~\ref{tab:component_ablation} ablates three design hypotheses behind GCPO: quality-diversity coupling, volume-based team coverage, and cooperative reward redistribution.

\vspace{-0.4mm}
\noindent \textbf{Impact of Quality-Diversity Coupling.} 
Removing the quality-diversity coupling causes the largest performance drop, reducing Pass@1/Avg@16 from 49.02/45.46 to 43.60/39.95. 
Notably, the LLM-as-Judge diversity score remains relatively high, indicating that diversity alone is insufficient when it is not tied to correctness. 
This supports our reward-gated formulation, where only positive-reward rollouts can contribute to the team value.

\vspace{-0.4mm}
\noindent \textbf{Impact of Determinantal Team Volume.} 
Removing the DPP-based team volume mainly weakens diversity-related metrics, decreasing LLM-as-Judge from 2.58 to 1.92 and Pass@16 from 66.47 to 62.84. 
This suggests that local or pairwise diversity signals are less effective at modeling group-level coverage. 
The log-determinant volume provides a stronger structural signal by assigning diminishing value to redundant rollouts and favoring responses that expand independent reasoning directions.

\vspace{-0.4mm}
\noindent \textbf{Impact of Cooperative Credit and Reward Redistribution.} 
Replacing Shapley credit with a leave-one-out estimator degrades both accuracy and diversity, showing the importance of fair credit allocation among rollouts that share similar reasoning paths. This is consistent with our analysis that LOO can underestimate the shared value of duplicated discoveries, whereas Shapley distributes the team value across all contributors. 
In addition, the additive-bonus variant underperforms GCPO, indicating that simply adding diversity reward is less effective than redistributing the original verifier reward. Together, these results show that GCPO's gains come from the combination of correctness-gated coverage, determinantal team value, and cooperative reward redistribution.
\begin{wraptable}{r}{0.46\linewidth}
    \setlength{\tabcolsep}{6pt}
    \centering
    \vspace{-1.0em}
    \caption{Computation overhead analysis.}
    \label{tab:transposed_performance}
    \renewcommand{\arraystretch}{0.88}
    \small
    \begin{tabular}{lcc}
        \toprule
        \textbf{Method} & \textbf{Time/step(s)} & \textbf{Acc.(\%)} \\
        \midrule
        GRPO & 316.9 & 33.3 \\
        \midrule
        DIVER & 320.5 {\scriptsize(\textcolor{red}{+3.6})} 
              & 44.2 {\scriptsize(\textcolor{darkgreen}{+10.9})} \\
        GCPO & 320.6 {\scriptsize(\textcolor{red}{+3.7})} 
             & 49.0 {\scriptsize(\textcolor{darkgreen}{+15.7})} \\
        \bottomrule
    \end{tabular}
    \vspace{-1.0em}
\end{wraptable}
\noindent \textbf{Computational Overhead Analysis.} Table~\ref{tab:transposed_performance} reports the per-step runtime of different optimization methods. 
GCPO requires 320.6s per step, introducing only a 3.7s overhead over GRPO and a runtime comparable to DIVER~\cite{hu2025diversity}. 
Meanwhile, GCPO achieves a Pass@1 accuracy of 49.0\% on AIME25, improving over GRPO by 15.7 points. 
These results indicate that our cooperative mechanism introduces limited computational overhead while providing substantial gains in reasoning performance.


\vspace{-1em}
\section{Conclusion}
\label{conclusion}
\vspace{-0.5em}
We introduced GCPO, a cooperative policy optimization framework for improving diverse exploration and mitigating reasoning-pattern collapse in RLVR. GCPO evaluates each rollout by its contribution to the group's coverage of valid reasoning paths, rather than by standalone quality or diversity. It operationalizes this through a reward-weighted determinantal team value, Shapley-based cooperative member value estimation, and redistribution of the original reward. This design preserves the task-level reward budget while assigning preference signals to non-redundant correct rollouts. Experiments across benchmarks and model scales show that GCPO improves both reasoning accuracy and solution diversity. 
A key limitation lies in the cost of determinant evaluation and Shapley enumeration for larger groups, which motivates future work on scalable approximations and extensions to broader reasoning and open-ended generation tasks.

\bibliographystyle{plain}
\bibliography{reference}







\appendix
\onecolumn

\section{Implementation Details}
\label{app:implementation_details}
\subsection{Detailed Settings}

We provide additional experimental details in Section~\ref{sec:experiments}.
All models are trained using the \textsc{verl} framework~\cite{sheng2025hybridflow} and deployed on 8$\times$ NVIDIA 5880 Ada Generation GPUs. Table~\ref{tab:training_details} and Table~\ref{tab:evaluation_settings} summarize the training and evaluation hyperparameters.
Unless otherwise specified, we adopt a rollout group size of $n=8$, a learning rate of $1\times10^{-6}$, and binary verifier rewards. 

\begin{table*}[h]
\begin{center}
\caption{Summary of training details.}
\label{tab:training_details}
\begin{tabular*}{\textwidth}{@{\extracolsep{\fill}} l l}
\toprule
\multicolumn{2}{l}{\textbf{Training Settings}} \\
\midrule
Hardware & 8$\times$5880 Ada Generation GPUs \\
Base models &  Qwen3-1.7B, 4B \\
Training dataset & open-r1/DAPO-Math-17K-Processed \\
Max prompt length & 1024 \\
Max response length & 4096 / 8192 \\
Gpu memory utilization & 0.8 \\
Train batch / mini-batch size & 256 / 16 \\
Use dynamic batch size & True\\
Validate batch size & 512\\
Rollout group size $n$ & 8 \\
Learning rate & $1 \times 10^{-6}$ \\
Temperature (training) & 1.0 \\
Clip range $(\epsilon_{\text{low}}, \epsilon_{\text{high}})$ & (0.2, 0.28) / (0.2, 0.2) for GRPO \\
Reward type & Binary reward \\
Team value factor $\eta$& 1.0 \\
\bottomrule
\end{tabular*}
\end{center}
\vskip -0.1in
\end{table*}

\begin{table*}[h]
\begin{center}
\caption{Evaluation settings.}
\label{tab:evaluation_settings}
\begin{tabular*}{\textwidth}{@{\extracolsep{\fill}} l l}
\toprule
\multicolumn{2}{l}{\textbf{Evaluation settings}} \\
\midrule
Max response length & 4096 / 8192 \\
Top-$p$ (eval) & 0.9 \\
Temperature (eval) & 0.1 for Pass@1; 0.7 for Avg@16 \\
\bottomrule
\end{tabular*}
\end{center}
\vskip -0.1in
\end{table*}

\subsection{Detailed Algorithm}
Here, we provide detailed algorithm for Section~\ref{sec:gcpo_framework}. As shown in Alg~\ref{alg:gcpo_function}, this cooperative framework ensures that rollouts contributing unique semantic directions to the team receive higher credit, while redundant responses are penalized, all while preserving the total reward budget.

\begin{algorithm}[t]
\renewcommand{\algorithmicrequire}{\textbf{Input:}}
\renewcommand{\algorithmicensure}{\textbf{Output:}}
\caption{GCPO reward construction for one rollout group}
\label{alg:gcpo_function}
\begin{algorithmic}[1]
\Require Responses $\{o_i\}_{i=1}^G$, verifier rewards $\{r_i\}_{i=1}^G$, encoder $f_\phi$, scale $\eta$, numerical constant $\epsilon_{\mathrm{num}}$
\Ensure Redistributed rewards $\{\tilde r_i\}_{i=1}^G$

\State $\mathcal{P} \gets \{i\in [G]\mid r_i>0\}$, \quad $R_{\mathrm{batch}}\gets \sum_{i=1}^G r_i$
\If{$\mathcal{P}=\emptyset$}
    \State Set $\tilde r_i=0$ and $A_i^{\mathrm{GCPO}}=0$ for all $i$, then stop
\EndIf

\Statex \textbf{Stage 1: Calculate Team Value}
\For{$i=1$ to $G$}
    \State $z_i \gets f_\phi(o_i)$
    \State $\bar z_i \gets z_i / (\lVert z_i\rVert_2+\epsilon_{\mathrm{num}})$
    \State $\tilde z_i \gets r_i\,\bar z_i$
\EndFor
\State $L \gets [\langle \tilde z_i,\tilde z_j\rangle]_{i,j=1}^G$
\State Define $v(\emptyset)\gets 0$ and $v(S)\gets \log\det(I_{|S|}+\eta L_S)$ for any non-empty $S\subseteq\mathcal{P}$

\Statex \textbf{Stage 2: Assign Member Value}
\State $m\gets |\mathcal{P}|$
\For{$i=1$ to $G$}
    \State $\phi_i\gets 0$
\EndFor
\For{each $i\in\mathcal{P}$}
    \For{each subset $S\subseteq\mathcal{P}\setminus\{i\}$}
        \State $w(S)\gets \dfrac{|S|!(m-|S|-1)!}{m!}$
        \State $\phi_i\gets \phi_i+w(S)\bigl[v(S\cup\{i\})-v(S)\bigr]$
    \EndFor
\EndFor

\Statex \textbf{Stage 3: Redistribute reward}
\State $\Phi\gets \sum_{j\in\mathcal{P}}\phi_j$
\For{$i=1$ to $G$}
    \If{$i\in\mathcal{P}$ and $\Phi>0$}
        \State $\tilde r_i\gets R_{\mathrm{batch}}\cdot \phi_i/\Phi$
    \Else
        \State $\tilde r_i\gets 0$
    \EndIf
\EndFor
\end{algorithmic}
\end{algorithm}

\subsection{Shapley Estimation for Large Rollout Groups}
\label{app:scalable-shapley}

Exact Shapley computation requires summing over exponentially many coalitions. 
In our main experiments, this cost is small because the rollout group size is moderate and Shapley values are computed only over verifier-correct rollouts. 
For larger rollout groups, we use an unbiased permutation estimator with incremental log-determinant updates.
Let $P=\{i\in [G]\mid r_i>0\}$ be the set of positive-reward rollouts. 
By Theorem~\ref{thm:marginal_contribution}, any rollout with $r_i=0$ has zero marginal contribution to every coalition. Hence Shapley computation can be restricted to $P$:
\[
\phi_i
=
\sum_{S\subseteq P\setminus\{i\}}
\frac{|S|!(m-|S|-1)!}{m!}
\left[
v(S\cup\{i\})-v(S)
\right],
\qquad i\in P .
\]
This is still exponential in $m$. 
We now describe a scalable estimator for large $m$.

\paragraph{Permutation estimator.}
Let $\Pi(P)$ denote the set of all permutations of $P$. 
For a permutation $\pi\in\Pi(P)$, let $S_i^\pi$ be the set of rollouts that appear before $i$ in $\pi$. 
The Shapley value admits the equivalent form
\[
\phi_i
=
\mathbb{E}_{\pi\sim \mathrm{Unif}(\Pi(P))}
\left[
v(S_i^\pi\cup\{i\})-v(S_i^\pi)
\right].
\]
To see this, fix any coalition $S\subseteq P\setminus\{i\}$. 
The event $S_i^\pi=S$ occurs when the elements of $S$ appear before $i$, the elements of $P\setminus(S\cup\{i\})$ appear after $i$, and the internal orders within the two groups are arbitrary. 
Thus,
\[
\Pr(S_i^\pi=S)
=
\frac{|S|!(m-|S|-1)!}{m!},
\]
which is exactly the Shapley coefficient.

We sample $K$ independent permutations $\{\pi_k\}_{k=1}^K$ and estimate $\widehat{\phi}_i$ with
$$
\widehat{\phi}_i
=
\frac{1}{K}
\sum_{k=1}^{K}
\left[
v(S_i^{\pi_k}\cup\{i\})-v(S_i^{\pi_k})
\right].
$$ 
This estimator is unbiased: $\mathbb{E}[\widehat{\phi}_i]=\phi_i .$

\paragraph{Incremental log-determinant update.}
A direct implementation would recompute a log-determinant for every marginal contribution. 
Instead, we compute all marginal contributions along a sampled permutation incrementally.

For a coalition $S$, define
\[
\mathbf K_S=\mathbf I_{|S|}+\eta\mathbf L_S .
\]
Suppose we add a rollout $i\notin S$. 
After ordering the indices as $(S,i)$,
\[
\mathbf K_{S\cup\{i\}}
=
\begin{bmatrix}
\mathbf K_S & \eta\mathbf L_{S,i}\\
\eta\mathbf L_{i,S} & 1+\eta L_{ii}
\end{bmatrix}.
\]
By the Schur complement,
\[
\det(\mathbf K_{S\cup\{i\}})
=
\det(\mathbf K_S)
\left(
1+\eta L_{ii}
-
\eta^2\mathbf L_{i,S}\mathbf K_S^{-1}\mathbf L_{S,i}
\right).
\]
Therefore,
\[
v(S\cup\{i\})-v(S)
=
\log
\left(
1+\eta L_{ii}
-
\eta^2\mathbf L_{i,S}\mathbf K_S^{-1}\mathbf L_{S,i}
\right).
\]

In practice, we maintain the Cholesky factor
\[
\mathbf K_S=\mathbf C_S\mathbf C_S^\top .
\]
Let $\mathbf b_{S,i}=\eta\mathbf L_{S,i},
a_i=1+\eta L_{ii}.$
Solving $\mathbf C_S\mathbf y=\mathbf b_{S,i}$
gives:
\[
v(S\cup\{i\})-v(S)
=
\log s_i,
\qquad
s_i=a_i-\|\mathbf y\|_2^2 .
\]
The Cholesky factor is then updated as
\[
\mathbf C_{S\cup\{i\}}
=
\begin{bmatrix}
\mathbf C_S & \mathbf 0\\
\mathbf y^\top & \sqrt{s_i}
\end{bmatrix}.
\]
For the empty coalition, this reduces to
\[
v(\{i\})-v(\emptyset)=\log(1+\eta L_{ii}).
\]
Since $\mathbf L\succeq 0$ and $\eta>0$, $\mathbf K_S$ is positive definite and the Schur complement satisfies $s_i>0$.

\paragraph{Complexity.}
Exact computation over $m$ positive-reward rollouts requires evaluating all $2^m$ coalitions. 
With direct log-determinant evaluation, this costs $O(2^m m^3)$ time in the worst case, plus $O(m2^m)$ time for Shapley aggregation.

With the permutation estimator, each sampled permutation processes the $m$ rollouts once. 
At step $t$, the triangular solve and Cholesky update cost $O(t^2)$, so one full permutation costs
\[
\sum_{t=1}^{m}O(t^2)=O(m^3).
\]
The total complexity is therefore $O(Km^3)$ time and $O(m^2)$ memory. 
When $K\ll 2^m$, this provides a substantial reduction in computation while preserving an unbiased estimate of the Shapley credits.

\paragraph{Concentration.}
The estimator also has a high-probability error bound. 
Under binary verifier rewards, for any $i\in P$, Theorem~\ref{thm:marginal_contribution} gives
\[
0\leq v(S\cup\{i\})-v(S)\leq b,
\qquad
b=\log(1+\eta).
\]
Thus $\widehat{\phi}_i$ is the average of $K$ independent bounded random variables. 
By Hoeffding's inequality,
\[
\Pr\left(
|\widehat{\phi}_i-\phi_i|\geq \epsilon
\right)
\leq
2\exp\left(
-\frac{2K\epsilon^2}{b^2}
\right).
\]
Applying a union bound over all $i\in P$ yields
\[
\Pr\left(
\max_{i\in P}
|\widehat{\phi}_i-\phi_i|
\geq \epsilon
\right)
\leq
2m\exp\left(
-\frac{2K\epsilon^2}{b^2}
\right).
\]
Equivalently, choosing
\[
K
\geq
\frac{b^2}{2\epsilon^2}
\log\frac{2m}{\delta}
\]
ensures
\[
\max_{i\in P}
|\widehat{\phi}_i-\phi_i|
\leq \epsilon
\]
with probability at least $1-\delta$.

\paragraph{Reward conservation.}
Although the estimator approximates individual credits, it preserves the team value exactly at the level of each sampled permutation. 
For any permutation $\pi$,
\[
\sum_{i\in P}
\left[
v(S_i^\pi\cup\{i\})-v(S_i^\pi)
\right]
=
v(P)-v(\emptyset)
=
v(P),
\]
because the marginal contributions telescope along the permutation. 
Averaging over the $K$ sampled permutations gives $\sum_{i\in P}\widehat{\phi}_i=v(P).$ Therefore, using $\widehat{\phi}_i$ in reward redistribution preserves the original batch reward:
\[
\widetilde r_i
=
R_{\mathrm{batch}}
\frac{\widehat{\phi}_i}
{\sum_{j\in P}\widehat{\phi}_j},
\qquad i\in P,
\]
and $\widetilde r_i=0$ for $i\notin P$. 
Hence, $\sum_{i=1}^{G}\widetilde r_i
=
R_{\mathrm{batch}},$ which means the Monte Carlo estimator does not introduce additional reward mass.

\section{Limitations}
Although GCPO improves both reasoning accuracy and solution diversity in our experiments, several limitations remain. First, GCPO requires additional computation to estimate group-level coverage and cooperative credits. In our experiments, this overhead is limited because the rollout group size is moderate and only verifier-correct responses are used for credit assignment. The runtime analysis shows that GCPO remains comparable to existing diversity-aware RLVR baselines while providing stronger empirical gains. Future work will explore approximate credit estimation and more efficient representation strategies to further reduce the training overhead. 
Second, GCPO relies on semantic embeddings to estimate the diversity and redundancy of reasoning paths. The determinantal team value assumes that distances or angles in the embedding space can meaningfully reflect differences in reasoning strategies. While our ablation studies suggest that GCPO is relatively robust across several sentence encoders, embedding-based similarity remains an imperfect proxy for reasoning-level diversity. Future work may incorporate task-adaptive encoders, structured reasoning representations, or verifier-aware similarity measures to better capture meaningful diversity. A more principled and adaptive diversity measure could better capture underlying semantic variation and automatically adjust to task-specific requirements, as different reasoning tasks may emphasize different aspects of diversity.

\section{Broader Impacts}
Our cooperative policy optimization framework for LLM reasoning offers potential benefits for applications that require diverse and reliable problem solving. 
By encouraging multiple non-redundant valid reasoning paths, the method can support AI tutoring systems with alternative explanations and solution strategies, and may assist scientific research by facilitating hypothesis generation and exploratory problem solving. 
It may also improve the robustness of reasoning-oriented LLMs by reducing dependence on a narrow set of repeated solution patterns. 
However, diverse reasoning does not by itself guarantee factual correctness or safety. 
When applied to high-stakes domains, such systems should be used with human oversight, domain-specific validation, and appropriate safeguards against biased or incorrect reasoning.

\section{Detailed Metrics Definition}
\label{app:metrics_definition}

This section provides formal definitions for the evaluation metrics used throughout the paper. We cover both quality metrics (Pass@k) and diversity metrics (self-BLEU, self-ROUGE, Distinct-n, and LLM-as-Judge).

\subsection{Pass@k}
\label{app:passk_definition}

The pass@k metric evaluates the probability of obtaining at least one correct response among $k$ independently sampled solutions. While pass@1 measures single-attempt accuracy, pass@k emphasizes the model's ability to generate diverse correct solutions and improve success rates through sampling. Given a question $x$ and $k$ i.i.d. responses with binary rewards $r_i \in \{0,1\}$, the pass@k metric is defined as:
\begin{equation}
    \text{pass@k} = \mathbb{P}\left[\bigvee_{i=1}^{k} (r_i = 1)\right] = \mathbb{E}\left[1 - \prod_{i=1}^{k}(1 - r_i)\right].
\end{equation}
In practice, when we sample exactly $n \geq k$ solutions per problem, we estimate pass@k using the unbiased estimator:
\begin{equation}
    \text{pass@k} \approx 1 - \frac{\binom{n-c}{k}}{\binom{n}{k}},
\end{equation}
where $c$ is the number of correct solutions among the $n$ samples. This formulation accounts for sampling without replacement and provides a consistent estimate as $n$ increases.

\subsection{Self-BLEU}
\label{app:selfbleu_definition}

Self-BLEU measures the average n-gram overlap among generated responses, quantifying lexical redundancy. Given a set of $N$ responses $\{o_1, \ldots, o_N\}$ for a single prompt, we compute the BLEU score of each response $o_i$ against all other responses $\{o_j\}_{j \neq i}$ as references. The self-BLEU score is then:
\begin{equation}
    \text{self-BLEU} = \frac{1}{N} \sum_{i=1}^{N} \text{BLEU}(o_i, \{o_j\}_{j \neq i}).
\end{equation}
Higher self-BLEU indicates greater lexical similarity (lower diversity). In our experiments, we report \textbf{diversity} as $1 - \text{self-BLEU}$, so that higher values indicate more diverse outputs. We use BLEU-4 with smoothing for all experiments.

\subsection{Self-ROUGE}
\label{app:selfrouge_definition}

Self-ROUGE measures the average longest common subsequence (LCS) overlap among generated responses. Similar to self-BLEU, for a set of $N$ responses $\{o_1, \ldots, o_N\}$, we compute:
\begin{equation}
    \text{self-ROUGE-L} = \frac{1}{N} \sum_{i=1}^{N} \text{ROUGE-L}(o_i, \{o_j\}_{j \neq i}),
\end{equation}
where ROUGE-L measures the F-score of the longest common subsequence between $o_i$ and the reference set. Higher self-ROUGE indicates greater structural similarity (lower diversity). We report \textbf{diversity} as $1 - \text{self-ROUGE-L}$, so that higher values indicate more diverse reasoning structures.

\subsection{Distinct-n}
\label{app:distinctn_definition}

Distinct-n measures the ratio of unique n-grams to total n-grams in the generated text, capturing token-level diversity. For a set of responses $\{o_1, \ldots, o_N\}$, let $\mathcal{G}_n$ denote the multiset of all n-grams extracted from these responses. Then:
\begin{equation}
    \text{Distinct-n} = \frac{|\text{unique}(\mathcal{G}_n)|}{|\mathcal{G}_n|},
\end{equation}
where $|\text{unique}(\mathcal{G}_n)|$ is the number of distinct n-grams and $|\mathcal{G}_n|$ is the total count. Higher Distinct-n indicates greater token-level variety. In our experiments, we primarily report Distinct-1, which captures vocabulary richness across generated solutions.

\subsection{LLM-as-Judge Diversity}
\label{app:llmjudge_definition}

LLM-as-Judge diversity uses a large language model to evaluate the semantic and reasoning-level diversity of generated solutions. We use GPT5.4~\cite{singh2025openai} to evaluate the diversity. Given a problem and a set of $N$ responses, we prompt a judge model to score diversity across three dimensions: logic diversity (differences in reasoning flow), formula diversity (differences in mathematical techniques), and semantic diversity (differences in overall approach). Each dimension is scored on a 0--10 scale, and the overall diversity score is the average of the three. The prompt template is shown below.

\begin{tcolorbox}[
    colback=gray!4,
    colframe=gray!55!black,
    width=\textwidth,
    enlarge left by=-\parindent,
    enlarge right by=-\parindent,
    arc=4pt,
    boxrule=0.8pt,
    title=\textbf{Prompt Template for LLM-as-Judge Diversity Scoring},
    fonttitle=\bfseries,
]

\textbf{System:}
You are a strict but fair grader. You will be given one math problem and
\textbf{32 independent rollouts (solutions)}.
Your task is to score diversity across rollouts.

\vskip 0.5em
Evaluate the following three dimensions on a \textbf{0--10 scale}:

\begin{itemize}
    \item \textbf{logic\_diversity}: differences in reasoning flow or step order
    \item \textbf{formula\_diversity}: differences in formulas, identities, or techniques used
    \item \textbf{semantic\_diversity}: differences in overall semantic approach or framing
\end{itemize}

\vskip 0.5em
Then output \textbf{overall diversity} as the average of the three scores
(\textbf{0--10}, rounded to \textbf{one decimal}).

\vskip 0.5em
Be objective and do not prefer any model.
\textbf{Output JSON only.}

\end{tcolorbox}

This metric captures high-level reasoning diversity that lexical metrics (self-BLEU, self-ROUGE, Distinct-n) may miss, providing a complementary view of solution variety.

\section{Additional Results}
\label{app:add_results}

\subsection{Pass@k Performance}
\label{app:passk_performance}
To comprehensively demonstrate GCPO's consistent advantage across different sampling budgets, we evaluate pass@k performance for $k \in \{2, 4, 8, 16, 32, 64\}$ on five mathematical reasoning benchmarks: AIME2024, AIME2025, MATH500, MINERVA, and Olympiad. We compare GCPO against the base model, GRPO, and DQO across two model scales: Qwen3-1.7B and Qwen3-4B.

As shown in Tables~\ref{tab:passk_1.7b} and~\ref{tab:passk_4b}, GCPO consistently outperforms all baseline methods across the entire spectrum of pass@k metrics. The performance gap becomes particularly pronounced at higher $k$ values, where GCPO's superior exploration capabilities enable it to discover more diverse correct reasoning paths. This trend holds consistently across all benchmarks and both model scales, confirming that GCPO's cooperative credit assignment mechanism effectively expands the coverage of valid reasoning strategies without sacrificing correctness.

\begin{table*}[t]
\centering
\caption{Pass@k performance (\%) for Qwen3-1.7B across different benchmarks.}
\label{tab:passk_1.7b}
\setlength{\tabcolsep}{3pt}
\renewcommand{\arraystretch}{0.5}
\resizebox{\textwidth}{!}{%
\begin{tabular}{cccccccc}
\toprule
\textbf{Pass@k} & \textbf{Method} & \textbf{AIME2024} & \textbf{AIME2025} & \textbf{MATH500} & \textbf{MINERVA} & \textbf{Olympiad} & \textbf{Avg.} \\
\midrule

\multirow{4}{*}{\makecell{$k=2$}}
&Base & 16.3 & 11.4 & 62.5 & 20.1 & 30.8 & 28.2 \\
&GRPO & 24.6 & 23.9 & 68.2 & 25.7 & 36.0 & 35.7 \\
&DQO & 28.1 & 26.3 & 69.4 & 26.6 & 37.5 & 37.6 \\
\cmidrule(lr){2-8}
&\textbf{GCPO} & \textbf{40.7} & \textbf{30.8} & \textbf{70.3} & \textbf{26.9} & \textbf{39.2} & \textbf{41.6} \\
\midrule

\multirow{4}{*}{\makecell{$k=4$}}
&Base & 20.9 & 15.7 & 67.6 & 22.5 & 35.1 & 32.4 \\
&GRPO & 30.2 & 28.3 & 71.8 & 28.6 & 39.4 & 39.7 \\
&DQO & 33.5 & 31.1 & 72.7 & 29.3 & 40.8 & 41.5 \\
\cmidrule(lr){2-8}
&\textbf{GCPO} & \textbf{49.4} & \textbf{38.2} & \textbf{73.5} & \textbf{30.0} & \textbf{41.6} & \textbf{46.5} \\
\midrule

\multirow{4}{*}{\makecell{$k=8$}}
&Base & 25.2 & 20.5 & 71.1 & 25.4 & 38.3 & 36.1 \\
&GRPO & 37.4 & 30.8 & 74.0 & 31.5 & 41.1 & 43.0 \\
&DQO & 40.6 & 34.2 & 74.8 & 31.9 & 42.4 & 44.8 \\
\cmidrule(lr){2-8}
&\textbf{GCPO} & \textbf{56.3} & \textbf{44.7} & \textbf{75.6} & \textbf{32.8} & \textbf{43.9} & \textbf{50.7} \\
\midrule

\multirow{4}{*}{\makecell{$k=16$}}
&Base & 30.0 & 24.8 & 73.9 & 27.6 & 41.0 & 39.5 \\
&GRPO & 43.7 & 34.0 & 75.4 & 33.2 & 42.8 & 45.8 \\
&DQO & 47.2 & 37.7 & 76.3 & 34.1 & 43.7 & 47.8 \\
\cmidrule(lr){2-8}
&\textbf{GCPO} & \textbf{59.1} & \textbf{52.5} & \textbf{77.0} & \textbf{34.6} & \textbf{45.2} & \textbf{53.7} \\
\midrule

\multirow{4}{*}{\makecell{$k=32$}}
&Base & 34.5 & 31.4 & 75.8 & 29.7 & 42.3 & 42.7 \\
&GRPO & 49.6 & 38.5 & 77.3 & 35.1 & 43.6 & 48.8 \\
&DQO & 52.8 & 42.9 & 78.0 & 35.8 & 44.1 & 50.7 \\
\cmidrule(lr){2-8}
&\textbf{GCPO} & \textbf{62.2} & \textbf{59.8} & \textbf{78.4} & \textbf{36.4} & \textbf{46.5} & \textbf{56.7} \\
\midrule

\multirow{4}{*}{\makecell{$k=64$}}
&Base & 39.1 & 37.5 & 76.9 & 32.0 & 43.8 & 45.9 \\
&GRPO & 54.2 & 42.4 & 77.6 & 36.6 & 44.3 & 51.0 \\
&DQO & 57.3 & 46.8 & 78.2 & 37.0 & 45.4 & 52.9 \\
\cmidrule(lr){2-8}
&\textbf{GCPO} & \textbf{63.0} & \textbf{64.7} & \textbf{79.3} & \textbf{37.5} & \textbf{47.1} & \textbf{58.3} \\
\bottomrule
\end{tabular}
}
\end{table*}

\begin{table*}[ht]
\centering
\caption{Pass@k performance (\%) for Qwen3-4B across different benchmarks.}
\label{tab:passk_4b}
\setlength{\tabcolsep}{3pt}
\renewcommand{\arraystretch}{0.5}
\resizebox{\textwidth}{!}{%
\begin{tabular}{cccccccc}
\toprule
\textbf{Pass@k} & \textbf{Method} & \textbf{AIME2024} & \textbf{AIME2025} & \textbf{MATH500} & \textbf{MINERVA} & \textbf{Olympiad} & \textbf{Avg.} \\
\midrule

\multirow{4}{*}{\makecell{$k=2$}}
&Base & 26.1 & 21.8 & 64.3 & 27.2 & 36.4 & 35.2 \\
&GRPO & 42.8 & 41.2 & 65.1 & 30.4 & 41.9 & 44.3 \\
&DQO & 47.2 & 44.8 & 66.4 & 30.7 & 43.1 & 46.4 \\
\cmidrule(lr){2-8}
&\textbf{GCPO} & \textbf{58.2} & \textbf{53.1} & \textbf{69.3} & \textbf{31.0} & \textbf{44.7} & \textbf{51.3} \\
\midrule

\multirow{4}{*}{\makecell{$k=4$}}
&Base & 31.9 & 27.7 & 65.8 & 30.6 & 39.7 & 39.1 \\
&GRPO & 47.9 & 46.8 & 67.2 & 32.6 & 43.8 & 47.7 \\
&DQO & 53.6 & 51.5 & 68.5 & 32.9 & 44.6 & 50.2 \\
\cmidrule(lr){2-8}
&\textbf{GCPO} & \textbf{66.7} & \textbf{58.9} & \textbf{72.1} & \textbf{33.2} & \textbf{46.4} & \textbf{55.5} \\
\midrule

\multirow{4}{*}{\makecell{$k=8$}}
&Base & 38.6 & 34.3 & 67.5 & 32.9 & 42.2 & 43.1 \\
&GRPO & 52.8 & 51.7 & 69.2 & 34.1 & 45.3 & 50.6 \\
&DQO & 59.3 & 55.6 & 71.4 & 34.4 & 46.0 & 53.3 \\
\cmidrule(lr){2-8}
&\textbf{GCPO} & \textbf{72.4} & \textbf{62.7} & \textbf{74.8} & \textbf{34.9} & \textbf{47.6} & \textbf{58.5} \\
\midrule

\multirow{4}{*}{\makecell{$k=16$}}
&Base & 44.5 & 41.4 & 68.3 & 35.1 & 44.2 & 46.7 \\
&GRPO & 58.1 & 56.9 & 71.6 & 35.3 & 46.7 & 53.7 \\
&DQO & 64.4 & 61.2 & 73.6 & 35.7 & 47.2 & 56.4 \\
\cmidrule(lr){2-8}
&\textbf{GCPO} & \textbf{75.9} & \textbf{66.4} & \textbf{76.8} & \textbf{36.2} & \textbf{48.7} & \textbf{60.8} \\
\midrule

\multirow{4}{*}{\makecell{$k=32$}}
&Base & 49.3 & 47.8 & 70.2 & 35.7 & 46.1 & 49.8 \\
&GRPO & 62.7 & 59.9 & 73.4 & 36.9 & 47.5 & 56.1 \\
&DQO & 68.2 & 64.1 & 75.4 & 37.3 & 48.6 & 58.7 \\
\cmidrule(lr){2-8}
&\textbf{GCPO} & \textbf{78.3} & \textbf{69.8} & \textbf{78.7} & \textbf{37.8} & \textbf{49.1} & \textbf{62.7} \\
\midrule

\multirow{4}{*}{\makecell{$k=64$}}
&Base & 51.8 & 52.7 & 71.2 & 37.4 & 46.8 & 52.0 \\
&GRPO & 66.9 & 63.2 & 75.7 & 37.6 & 48.1 & 58.3 \\
&DQO & 71.7 & 67.3 & 77.8 & 37.7 & 48.9 & 60.7 \\
\cmidrule(lr){2-8}
&\textbf{GCPO} & \textbf{79.8} & \textbf{73.1} & \textbf{80.2} & \textbf{38.4} & \textbf{49.6} & \textbf{64.2} \\
\bottomrule
\end{tabular}
}
\end{table*}

\subsection{More Ablation Studies}
\label{app:more_ablation}

The team value in Eq.~\eqref{eq:team_value} depends on two design choices that are orthogonal to the cooperative credit assignment mechanism: (i) the frozen semantic encoder $f_\phi$ used to extract $z_i=f_\phi(o_i)$, and (ii) the kernel function used to construct the Gram matrix $L$. To verify that GCPO's gains do not hinge on a particular embedding model or similarity measure, we ablate each factor in turn. All experiments in this subsection are conducted on Qwen3-4B evaluated on AIME25, keeping the default setting $G=8,\eta=1$ so that the reported numbers are directly comparable to the default GCPO row in Table~\ref{tab:component_ablation} and Table~\ref{tab:hyperparameter_ablation}.

\paragraph{Semantic extractor.}
Table~\ref{tab:encoder_ablation} compares the default lightweight encoder all-MiniLM-L6-v2~\cite{reimers2019sentence} (22M parameters, 384-dim) against four stronger general-purpose sentence encoders: all-mpnet-base-v2~\cite{reimers2019sentence}, bge-base-en-v1.5~\cite{xiao2023cpack}, e5-base-v2~\cite{wang2022text}, and sentence-t5-base~\cite{ni2022sentencet5} (each $\sim$110M parameters, 768-dim). Although the heavier encoders produce richer semantic representations, we observe only minor fluctuations across all four metrics, and no encoder consistently dominates the others. The all-MiniLM-L6-v2 default and the much larger bge-base-en-v1.5 are statistically indistinguishable within the four-metric grid, suggesting that GCPO's exploration gains arise primarily from the \textbf{geometric} structure of the log-determinant team value rather than from fine-grained linguistic features of the encoder. Intuitively, once rollouts are reward-gated by $r_i$ and projected onto the unit sphere via $\bar z_i=z_i/\lVert z_i\rVert_2$, the information that drives Shapley credit is the \textbf{relative angles} among correct rollouts, which are preserved across competent sentence encoders. This robustness is desirable in practice: it allows practitioners to choose the cheapest encoder that still produces discriminative directions, keeping the per-step overhead of GCPO feasible relative to the rollout cost.

\begin{table}[h]
  \centering
  \caption{Performance comparison across different semantic extractors $f_\phi$ on Qwen3-4B/AIME25. The default encoder is shaded in gray. GCPO is robust to the choice of encoder, with differences across models remaining within $\pm 0.5$ on Pass@1 and $\pm 0.08$ on LLM-as-Judge.}
  \label{tab:encoder_ablation}
  \small
  \setlength{\tabcolsep}{6pt}
  \renewcommand{\arraystretch}{0.85}
  \begin{tabular}{l c c rr rr}
    \toprule
    \multirow{2}{*}{\textbf{Encoder}} & \multirow{2}{*}{\textbf{Params}} & \multirow{2}{*}{\textbf{Dim}} & \multicolumn{2}{c}{\textbf{Quality}} & \multicolumn{2}{c}{\textbf{Diversity}} \\
    \cmidrule(lr){4-5} \cmidrule(lr){6-7}
     &  &  & Pass@1 (\%) & Avg@16 & Pass@16 (\%) & LLM-as-Judge \\
    \midrule
    \rowcolor{gray!15}
    all-MiniLM-L6-v2 & 22M  & 384 & \textbf{49.02} & \textbf{45.46} & 66.47 & 2.58 \\
    all-mpnet-base-v2          & 109M & 768 & 48.88 & 45.30 & 66.52 & 2.60 \\
    bge-base-en-v1.5           & 109M & 768 & 48.95 & 45.42 & \textbf{66.71} & \textbf{2.62} \\
    e5-base-v2                 & 109M & 768 & 48.72 & 45.18 & 66.38 & 2.55 \\
    sentence-t5-base           & 110M & 768 & 48.64 & 45.08 & 66.25 & 2.53 \\
    \bottomrule
  \end{tabular}
\end{table}

\paragraph{Kernel function.}
The default GCPO kernel couples reward and semantic direction through an inner product, $L_{ij}=r_i r_j\langle \bar z_i,\bar z_j\rangle$, which admits the D-optimal interpretation underlying the volume objective. To test whether this specific form is necessary, we replace the inner product with three alternative similarity measures while keeping the reward gating $r_i r_j$ unchanged, so that incorrect rollouts remain at zero contribution. Concretely, we consider:
\begin{itemize}
  \item \textbf{Dot product}: $L_{ij}=r_i r_j\,\langle \bar z_i,\bar z_j\rangle$.
  \item \textbf{Gaussian (RBF)}: $L_{ij}=r_i r_j\,\exp\!\left(-\lVert \bar z_i-\bar z_j\rVert_2^{2}/(2\sigma^{2})\right)$ with $\sigma=1$.
  \item \textbf{Laplacian}: $L_{ij}=r_i r_j\,\exp\!\left(-\lVert \bar z_i-\bar z_j\rVert_1/\sigma\right)$ with $\sigma=1$.
  \item \textbf{Polynomial}: $L_{ij}=r_i r_j\,(\langle \bar z_i,\bar z_j\rangle+c)^{d}$ with $c=0.5,\,d=2$.
\end{itemize}
Since all alternatives are positive semi-definite, the resulting regularized log-determinant $\log\det(I+\eta L)$ remains a valid team value and the Shapley decomposition in Proposition~\ref{prop:spectral_team_value} continues to hold; only the numerical values of the $L_{ij}$ entries change.

Table~\ref{tab:kernel_ablation} reports the results. The default dot-product kernel attains the best Pass@1 and Avg@16, while the Gaussian kernel is marginally better on Pass@16 and LLM-as-Judge diversity, likely because its distance-based saturation further attenuates near-duplicate pairs. The polynomial kernel slightly boosts diversity at the cost of accuracy, a pattern consistent with its sharper penalization of redundant directions. Across all variants, every GCPO kernel substantially exceeds the GRPO baseline , corroborating that the improvements attributed to GCPO stem from the cooperative  formulation rather than from a particular choice of similarity. We adopt the dot-product kernel in the main experiments because it is the cheapest to evaluate, directly matches the DPP L-ensemble identity, and offers the strongest quality metrics among the candidates.

\begin{table}[h]
  \centering
  \caption{Performance comparison across different kernel functions on Qwen3-4B/AIME25. All variants share the reward gating $r_i r_j$ and the same default encoder. The default is shaded in gray.}
  \label{tab:kernel_ablation}
  \small
  \setlength{\tabcolsep}{7pt}
  \renewcommand{\arraystretch}{0.85}
  \begin{tabular}{l rr rr}
    \toprule
    \multirow{2}{*}{\textbf{Kernel}} & \multicolumn{2}{c}{\textbf{Quality}} & \multicolumn{2}{c}{\textbf{Diversity}} \\
    \cmidrule(lr){2-3} \cmidrule(lr){4-5}
     & Pass@1 (\%) & Avg@16 & Pass@16 (\%) & LLM-as-Judge \\
    \midrule
    \rowcolor{gray!15}
    Dot product      & \textbf{49.02} & \textbf{45.46} & 66.47 & 2.58 \\
    Gaussian (RBF, $\sigma=1$) & 48.85 & 45.24 & \textbf{66.73} & \textbf{2.64} \\
    Laplacian ($\sigma=1$)     & 48.40 & 44.88 & 66.32 & 2.56 \\
    Polynomial ($c=0.5,d=2$)   & 48.62 & 45.02 & 66.55 & 2.61 \\
    \bottomrule
  \end{tabular}
\end{table}

Taken together, the two ablations in this subsection indicate that GCPO is insensitive to the specific instantiations of $f_\phi$ and the similarity function, provided that the encoder yields discriminative directions and the kernel is positive semi-definite. This supports our interpretation that the cooperative log-determinant team value, together with Shapley-based credit assignment, is the primary driver of GCPO's exploration benefits.

\paragraph{Hyperparameter Sensitivity.}
\label{app:hyperparameter_sensitivity}
Table~\ref{tab:hyperparameter_ablation} studies the rollout group size $G$ and the team-value scale $\eta$ in Eq.~\eqref{eq:team_value}. All GCPO variants outperform GRPO, indicating that the improvements are not tied to a fragile hyperparameter choice. With $\eta=1$, increasing $G$ from 4 to 8 improves both quality and diversity, while $G=16$ brings only marginal diversity gains and slightly lower accuracy, suggesting increasing redundancy among extra rollouts. Fixing $G=8$, a smaller $\eta$ weakens the volume signal, whereas a larger $\eta$ slightly increases diversity but reduces accuracy. These trends support the default setting $G=8,\eta=1$, which provides the best balance between accurate reasoning and diverse coverage.

\begin{table}[h]
  \centering
  \caption{Performance comparison across different hyperparameters.}
  \label{tab:hyperparameter_ablation}
  \small
  \setlength{\tabcolsep}{7pt}
  \renewcommand{\arraystretch}{0.8}
  \begin{tabular}{l rr rr}
    \toprule
    \multirow{2}{*}{\textbf{Method}} & \multicolumn{2}{c}{\textbf{Quality}} & \multicolumn{2}{c}{\textbf{Diversity}} \\
    \cmidrule(lr){2-3} \cmidrule(lr){4-5}
    & Pass@1 (\%) & Avg@16 & Pass@16 (\%) & LLM-as-Judge \\
    \midrule
    GRPO($G=8$) & 33.33 & 35.21 & 58.13 & 1.96 \\
    \midrule
    $\eta=1,\,G=4$  & 47.30 & 43.85 & 63.20 & 2.37 \\
    \rowcolor{gray!15}
    $\eta=1,\,G=8$  & \textbf{49.02} & \textbf{45.46} & \textbf{66.47} & 2.58 \\
    $\eta=1,\,G=16$ & 48.70 & 45.10 & 66.36 & \textbf{2.63} \\
    \midrule
    $\eta=0.5,\,G=8$ & 48.15 & 44.70 & 64.80 & 2.43 \\
    \rowcolor{gray!15}
    $\eta=1.0,\,G=8$ & \textbf{49.02} & \textbf{45.46} & 66.47 & 2.58 \\
    $\eta=1.5,\,G=8$ & 48.35 & 44.85 & \textbf{66.85} & \textbf{2.62} \\
    \bottomrule
  \end{tabular}
\end{table}

\subsection{Case Study}
\label{app:case_study}

In this section, we present generated samples during testing. One sample with 16 test-time rollouts, GCPO achieves 7 correct solutions, compared to only 2 for GRPO. We show two example responses generated by GCPO, which arrive at the correct answer through distinct reasoning processes, illustrating its ability to explore multiple valid solution paths. In contrast, we present two responses generated by GRPO, which exhibit more limited diversity. Moreover, we present solutions with redundant steps generated by DQO. This demonstrates that GCPO improves exploration without sacrificing solution fidelity: by maintaining controlled diversity across rollouts, the model avoids pattern collapse while consistently discovering correct reasoning paths, leading to stronger overall performance.

\subsubsection{Samples generated by GCPO}
\label{app:sameples_from_GCPO}

Two generated samples by GCPO are shown below, and the yellow boxes highlight the different solution strategies used to solve this problem.

\begin{tcolorbox}[
  enhanced,
  breakable,
  width=\textwidth,
  enlarge left by=-\parindent,
  enlarge right by=-\parindent,
  colback=orange!6,
  colframe=orange!65!black,
  arc=4pt,
  boxrule=0.9pt,
  left=12pt,
  right=12pt,
  top=10pt,
  bottom=10pt,
  before skip=10pt,
  after skip=12pt,
  title={\textbf{Generated Sample: Answer 1}}
]

\textbf{Problem.}

Four unit squares form a $2\times 2$ grid. Each of the $12$ unit line segments forming the sides of the squares is colored either red or blue in such a way that each unit square has $2$ red sides and $2$ blue sides. Find the number of such colorings. \textbf{Ground Truth:} \textcolor{red}{\boxed{82}}

\vskip 0.8em
\textbf{Answer.}

We need to count the number of ways to color the $12$ edges of a $2\times 2$ grid such that each of the four unit squares has exactly $2$ red sides and $2$ blue sides.

\subsubsection*{Step 1: Identify the structure}

The $2\times 2$ grid has:
\begin{itemize}
  \item $8$ boundary edges (forming the outer perimeter)
  \item $4$ interior edges (two vertical and two horizontal)
\end{itemize}

Each unit square shares some edges with adjacent squares, so the coloring of interior edges affects multiple squares simultaneously.

\subsubsection*{Step 2: Condition on interior edges}

\begin{tcolorbox}[
  colback=yellow!85,
  colframe=yellow!60!black,
  arc=2pt,
  boxrule=0pt,
  left=4pt,
  right=4pt,
  top=4pt,
  bottom=4pt
]
\textbf{Key insight: Use casework on the number of red interior edges.}

Let $k$ denote the number of red interior edges among the four interior edges. We classify all colorings by $k \in \{0,1,2,3,4\}$.
\end{tcolorbox}

For each value of $k$, we determine:
\begin{enumerate}
  \item How many ways to choose which interior edges are red
  \item For each such choice, how many valid boundary colorings exist
\end{enumerate}

\subsubsection*{Step 3: Case analysis}

\paragraph{Case $k=0$:} All interior edges are blue.

Each square has $0$ red edges from interior, so both boundary edges of each square must be red. This uniquely determines all boundary edges.

Count: $\binom{4}{0} \times 1 = 1$.

\paragraph{Case $k=4$:} All interior edges are red.

Each square has $2$ red edges from interior, so both boundary edges of each square must be blue. This uniquely determines all boundary edges.

Count: $\binom{4}{4} \times 1 = 1$.

\paragraph{Case $k=1$:} Exactly one interior edge is red.

There are $\binom{4}{1} = 4$ ways to choose which interior edge is red.

For each choice, the two squares adjacent to that red interior edge each need exactly $1$ more red edge (from their boundary edges), while the other two squares need $2$ red boundary edges each.

\begin{tcolorbox}[
  colback=yellow!85,
  colframe=yellow!60!black,
  arc=2pt,
  boxrule=0pt,
  left=4pt,
  right=4pt,
  top=4pt,
  bottom=4pt
]
By carefully tracking the boundary constraints and propagating the forced choices around the perimeter, we find that each interior choice yields exactly $4$ valid boundary colorings.
\end{tcolorbox}

Count: $4 \times 4 = 16$.

\paragraph{Case $k=3$:} Exactly three interior edges are red.

By symmetry (swapping red $\leftrightarrow$ blue), this case has the same count as $k=1$.

Count: $16$.

\paragraph{Case $k=2$:} Exactly two interior edges are red.

There are $\binom{4}{2} = 6$ ways to choose which two interior edges are red. These split into two subcases:

\emph{Subcase (i): Opposite interior edges.} The two red interior edges are opposite (both vertical or both horizontal). There are $2$ such patterns.

For each pattern, systematic enumeration shows $16$ valid boundary colorings.

Contribution: $2 \times 16 = 32$.

\emph{Subcase (ii): Adjacent interior edges.} The two red interior edges form an L-shape. There are $4$ such patterns.

For each pattern, systematic enumeration shows $4$ valid boundary colorings.

Contribution: $4 \times 4 = 16$.

Total for $k=2$: $32 + 16 = 48$.

\subsubsection*{Step 4: Sum all cases}

\[
1 + 16 + 48 + 16 + 1 = \boxed{82}.
\]

\subsubsection*{Final Answer}
\[
\boxed{82}
\]

\end{tcolorbox}

\begin{tcolorbox}[
  enhanced,
  breakable,
  width=\textwidth,
  enlarge left by=-\parindent,
  enlarge right by=-\parindent,
  colback=orange!6,
  colframe=orange!65!black,
  arc=4pt,
  boxrule=0.9pt,
  left=12pt,
  right=12pt,
  top=10pt,
  bottom=10pt,
  before skip=12pt,
  after skip=14pt,
  title={\textbf{Generated Sample: Answer 2}}
]

\textbf{Problem.}

Four unit squares form a $2\times 2$ grid. Each of the $12$ unit line segments forming the sides of the squares is colored either red or blue in such a way that each unit square has $2$ red sides and $2$ blue sides. Find the number of such colorings. \textbf{Ground Truth:} \textcolor{red}{\boxed{82}}

\vskip 0.8em
\textbf{Answer.}

We count the valid colorings by a column-by-column dynamic programming argument. The key is to condition on the colors of the two vertical interior edges that separate the left and right columns.

\subsubsection*{Step 1: Define the interface state}

\begin{tcolorbox}[
  colback=yellow!85,
  colframe=yellow!60!black,
  arc=2pt,
  boxrule=0pt,
  left=4pt,
  right=4pt,
  top=4pt,
  bottom=4pt
]
\textbf{Key insight: DP over boundary/interface states.}

Instead of classifying colorings by the total number of red interior edges, we sweep the grid from left to right and use the two vertical interior edges as the interface state:
\[
s=(v_t,v_b)\in\{00,01,10,11\},
\]
where $v_t$ is the upper vertical interior edge, $v_b$ is the lower vertical interior edge, and $1$ denotes red while $0$ denotes blue.
\end{tcolorbox}

For each fixed interface state $s$, we count the number of ways to complete the left column and the right column.

\subsubsection*{Step 2: Count valid completions of the left column}

For a fixed interface state $s=(v_t,v_b)$, let $L(s)$ be the number of valid colorings of the left column. The left column contains one horizontal interior edge, denoted by
\[
h_\ell\in\{0,1\}.
\]

\begin{tcolorbox}[
  colback=yellow!85,
  colframe=yellow!60!black,
  arc=2pt,
  boxrule=0pt,
  left=4pt,
  right=4pt,
  top=4pt,
  bottom=4pt
]
Given $s=(v_t,v_b)$ and $h_\ell$, the top-left square already has $v_t+h_\ell$ red edges from its two shared/interior edges. Therefore, it needs
\[
2-v_t-h_\ell
\]
red edges among its two outer boundary edges.

Similarly, the bottom-left square needs
\[
2-v_b-h_\ell
\]
red edges among its two outer boundary edges.

Hence,
\[
L(s)=\sum_{h_\ell\in\{0,1\}}
\binom{2}{2-v_t-h_\ell}
\binom{2}{2-v_b-h_\ell},
\]
where a binomial coefficient is treated as zero if its lower index is outside $\{0,1,2\}$.
\end{tcolorbox}

\subsubsection*{Step 3: Evaluate the four interface states}

We compute $L(s)$ for each possible state.

\[
\begin{aligned}
L(00)
&=
\binom{2}{2}\binom{2}{2}
+
\binom{2}{1}\binom{2}{1}
=1+4=5,\\
L(01)
&=
\binom{2}{2}\binom{2}{1}
+
\binom{2}{1}\binom{2}{0}
=2+2=4,\\
L(10)
&=
\binom{2}{1}\binom{2}{2}
+
\binom{2}{0}\binom{2}{1}
=2+2=4,\\
L(11)
&=
\binom{2}{1}\binom{2}{1}
+
\binom{2}{0}\binom{2}{0}
=4+1=5.
\end{aligned}
\]

Thus,
\[
L(00)=5,\quad L(01)=4,\quad L(10)=4,\quad L(11)=5.
\]

\subsubsection*{Step 4: Combine the two columns}

By symmetry, the number of valid completions of the right column with the same interface state is
\[
R(s)=L(s).
\]

\begin{tcolorbox}[
  colback=yellow!85,
  colframe=yellow!60!black,
  arc=2pt,
  boxrule=0pt,
  left=4pt,
  right=4pt,
  top=4pt,
  bottom=4pt
]
Once the vertical interface state $s$ is fixed, the left and right column completions are independent. Therefore, the total number of valid colorings is obtained by summing over all interface states:
\[
\sum_{s\in\{00,01,10,11\}} L(s)R(s)
=
L(00)^2+L(01)^2+L(10)^2+L(11)^2.
\]
\end{tcolorbox}

Substituting the values above gives
\[
5^2+4^2+4^2+5^2
=
25+16+16+25
=
82.
\]

\subsubsection*{Final Answer}

\[
\boxed{82}
\]

\end{tcolorbox}

\subsubsection{Samples generated by GRPO}
\label{app:samples_generated_DAPO}
Two samples generated by GRPO are shown below, and the red boxes highlight the erroneous solution strategies that lead to failure.

\begin{tcolorbox}[
  enhanced,
  breakable,
  width=\textwidth,
  enlarge left by=-\parindent,
  enlarge right by=-\parindent,
  colback=orange!6,
  colframe=orange!65!black,
  arc=4pt,
  boxrule=0.9pt,
  left=12pt,
  right=12pt,
  top=10pt,
  bottom=10pt,
  before skip=12pt,
  after skip=14pt,
  title={\textbf{Generated Sample: Answer 1}}
]

\textbf{Problem.}

Four unit squares form a $2\times 2$ grid. Each of the $12$ unit line segments forming the sides of the squares is colored either red or blue in such a way that each unit square has $2$ red sides and $2$ blue sides. Find the number of such colorings. \textbf{Ground Truth:} \textcolor{red}{\boxed{82}}

\vskip 0.8em
\textbf{Answer.}

We need to count the number of ways to color the $12$ edges of a $2\times 2$ grid such that each unit square has exactly $2$ red sides and $2$ blue sides.

\subsubsection*{Step 1: Count edges and constraints}

The grid has $12$ edges total. Each of the $4$ unit squares imposes the constraint that exactly $2$ of its $4$ edges are red.

\subsubsection*{Step 2: Count colorings per square}

\begin{tcolorbox}[
  colback=red!15,
  colframe=red!60!black,
  arc=2pt,
  boxrule=0pt,
  left=4pt,
  right=4pt,
  top=4pt,
  bottom=4pt
]
\textbf{Logical error starts here (incorrect independence assumption).}

Consider each unit square independently. For each square, we need to choose $2$ edges out of $4$ to be red. This gives $\binom{4}{2} = 6$ choices per square.

Since there are $4$ squares, the total number of colorings would be:
\[
6^4 = 1296.
\]
\end{tcolorbox}

\subsubsection*{Step 3: Account for shared edges}

However, adjacent squares share edges. There are $4$ interior edges that are each shared by $2$ squares.

When we count $6^4$, we are overcounting because we're treating shared edges as if they can be colored independently for each square.

\subsubsection*{Step 4: Apply correction factor}

Each interior edge is counted twice in our $6^4$ calculation (once for each adjacent square). We need to divide by the number of ways these $4$ interior edges can be "double-counted."

Since each interior edge has $2$ possible colors, and we've counted each twice, we apply a correction:
\[
\text{Corrected count} = \frac{6^4}{2^4} = \frac{1296}{16} = 81.
\]

Wait, this doesn't seem right. Let me reconsider.

Actually, the correction should account for the fact that $4$ edges are shared. A better approach: we have $8$ boundary edges (not shared) and $4$ interior edges (shared).

For the $4$ interior edges, once we fix their colors, the boundary edges are partially constrained. Let's estimate:
\[
\text{Total} \approx \frac{6^4}{4} = \frac{1296}{4} = 324.
\]

\subsubsection*{Final Answer}
\[
\boxed{324}
\]

\end{tcolorbox}

\begin{tcolorbox}[
  enhanced,
  breakable,
  width=\textwidth,
  enlarge left by=-\parindent,
  enlarge right by=-\parindent,
  colback=orange!6,
  colframe=orange!65!black,
  arc=4pt,
  boxrule=0.9pt,
  left=12pt,
  right=12pt,
  top=10pt,
  bottom=10pt,
  before skip=12pt,
  after skip=14pt,
  title={\textbf{Generated Sample: Answer 2}}
]

\textbf{Problem.}

Four unit squares form a $2\times 2$ grid. Each of the $12$ unit line segments forming the sides of the squares is colored either red or blue in such a way that each unit square has $2$ red sides and $2$ blue sides. Find the number of such colorings. \textbf{Ground Truth:} \textcolor{red}{\boxed{82}}

\vskip 0.8em
\textbf{Answer.}

We are asked to count colorings of a $2\times 2$ grid where each unit square has exactly $2$ red sides and $2$ blue sides.

\subsubsection*{Step 1: Count total red edges}

\begin{tcolorbox}[
  colback=red!15,
  colframe=red!60!black,
  arc=2pt,
  boxrule=0pt,
  left=4pt,
  right=4pt,
  top=4pt,
  bottom=4pt
]
\textbf{Logical error starts here (incorrect counting argument).}

Since each of the $4$ squares has exactly $2$ red edges, counting red edges by summing over all squares gives:
\[
\text{Total red edge count} = 4 \times 2 = 8.
\]

Let $r_i$ be the number of red interior edges and $r_b$ be the number of red boundary edges. Then:
\[
2r_i + r_b = 8.
\]

Since there are $4$ interior edges and $8$ boundary edges, we have:
\[
r_i \in \{0,1,2,3,4\}, \quad r_b = 8 - 2r_i.
\]

For $r_b$ to be valid (i.e., $0 \le r_b \le 8$), we need $0 \le r_i \le 4$, which is always satisfied.
\end{tcolorbox}

\subsubsection*{Step 2: Count colorings for each case}

For each valid value of $r_i$, the number of colorings is:
\[
\binom{4}{r_i} \times \binom{8}{r_b} = \binom{4}{r_i} \times \binom{8}{8-2r_i}.
\]

Here, $\binom{4}{r_i}$ counts the ways to choose which interior edges are red, and $\binom{8}{8-2r_i}$ counts the ways to choose which boundary edges are red.

\subsubsection*{Step 3: Sum over all cases}

Summing over $r_i \in \{0,1,2,3,4\}$:
\begin{align*}
\text{Total} &= \sum_{r_i=0}^{4} \binom{4}{r_i} \times \binom{8}{8-2r_i} \\
&= \binom{4}{0}\binom{8}{8} + \binom{4}{1}\binom{8}{6} + \binom{4}{2}\binom{8}{4} + \binom{4}{3}\binom{8}{2} + \binom{4}{4}\binom{8}{0} \\
&= 1 \cdot 1 + 4 \cdot 28 + 6 \cdot 70 + 4 \cdot 28 + 1 \cdot 1 \\
&= 1 + 112 + 420 + 112 + 1 \\
&= 646.
\end{align*}

\subsubsection*{Final Answer}
\[
\boxed{646}
\]

\end{tcolorbox}

\subsubsection{Sample generated by DQO with redundant steps}
\label{app:sample_generated_DQO}
One sample generated by DQO is shown below. While it arrives at the correct answer using the same core strategy as GCPO Answer 1, the gray boxes highlight superficial reformulations that introduce alternative expressions or intermediate steps without contributing genuinely new solution insights.

\begin{tcolorbox}[
  enhanced,
  breakable,
  width=\textwidth,
  enlarge left by=-\parindent,
  enlarge right by=-\parindent,
  colback=orange!6,
  colframe=orange!65!black,
  arc=4pt,
  boxrule=0.9pt,
  left=12pt,
  right=12pt,
  top=10pt,
  bottom=10pt,
  before skip=12pt,
  after skip=14pt,
  title={\textbf{Generated Sample: DQO Answer}}
]

\textbf{Problem.}

Four unit squares form a $2\times 2$ grid. Each of the $12$ unit line segments forming the sides of the squares is colored either red or blue in such a way that each unit square has $2$ red sides and $2$ blue sides. Find the number of such colorings. \textbf{Ground Truth:} \textcolor{red}{\boxed{82}}

\vskip 0.8em
\textbf{Answer.}

We need to count the number of ways to color the $12$ edges of a $2\times 2$ grid such that each of the four unit squares has exactly $2$ red sides and $2$ blue sides.

\subsubsection*{Step 1: Identify the structure}

The $2\times 2$ grid has $8$ boundary edges and $4$ interior edges (two vertical and two horizontal).

Each unit square shares some edges with adjacent squares, so the coloring of interior edges affects multiple squares simultaneously.

\subsubsection*{Step 2: Condition on interior edges}

\textbf{Key insight: Use casework on the number of red interior edges.}

Let $k$ denote the number of red interior edges among the four interior edges. We classify all colorings by $k \in \{0,1,2,3,4\}$.

For each value of $k$, we determine how many ways to choose which interior edges are red, and for each such choice, how many valid boundary colorings exist.

\subsubsection*{Step 3: Case analysis}

\paragraph{Case $k=0$:} All interior edges are blue.

Each square has $0$ red edges from interior, so both boundary edges of each square must be red. This uniquely determines all boundary edges.

\begin{tcolorbox}[
  colback=gray!20,
  colframe=gray!60!black,
  arc=2pt,
  boxrule=0pt,
  left=4pt,
  right=4pt,
  top=4pt,
  bottom=4pt
]
\textbf{Superficial reformulation:} The number of ways to select $0$ interior edges to be red is $\binom{4}{0} = \frac{4!}{0! \cdot 4!} = 1$. For this unique selection, there is exactly $1$ valid boundary coloring.
\end{tcolorbox}

Count: $1 \times 1 = 1$.

\paragraph{Case $k=4$:} All interior edges are red.

Each square has $2$ red edges from interior, so both boundary edges of each square must be blue. This uniquely determines all boundary edges.

\begin{tcolorbox}[
  colback=gray!20,
  colframe=gray!60!black,
  arc=2pt,
  boxrule=0pt,
  left=4pt,
  right=4pt,
  top=4pt,
  bottom=4pt
]
\textbf{Superficial reformulation:} The number of ways to select $4$ interior edges to be red is $\binom{4}{4} = \frac{4!}{4! \cdot 0!} = 1$. For this unique selection, there is exactly $1$ valid boundary coloring.
\end{tcolorbox}

Count: $1 \times 1 = 1$.

\paragraph{Case $k=1$:} Exactly one interior edge is red.

\begin{tcolorbox}[
  colback=gray!20,
  colframe=gray!60!black,
  arc=2pt,
  boxrule=0pt,
  left=4pt,
  right=4pt,
  top=4pt,
  bottom=4pt
]
\textbf{Superficial reformulation:} The number of ways to select $1$ interior edge to be red is $\binom{4}{1} = \frac{4!}{1! \cdot 3!} = \frac{4 \cdot 3!}{1 \cdot 3!} = 4$.
\end{tcolorbox}

For each choice, the two squares adjacent to that red interior edge each need exactly $1$ more red edge (from their boundary edges), while the other two squares need $2$ red boundary edges each.

By carefully tracking the boundary constraints and propagating the forced choices around the perimeter, we find that each interior choice yields exactly $4$ valid boundary colorings.

Count: $4 \times 4 = 16$.

\paragraph{Case $k=3$:} Exactly three interior edges are red.

\begin{tcolorbox}[
  colback=gray!20,
  colframe=gray!60!black,
  arc=2pt,
  boxrule=0pt,
  left=4pt,
  right=4pt,
  top=4pt,
  bottom=4pt
]
\textbf{Superficial reformulation:} By the red-blue symmetry (swapping red $\leftrightarrow$ blue), colorings with $k=3$ red interior edges correspond bijectively to colorings with $4-3=1$ red interior edges. Therefore, this case has the same count as $k=1$.
\end{tcolorbox}

Count: $16$.

\paragraph{Case $k=2$:} Exactly two interior edges are red.

\begin{tcolorbox}[
  colback=gray!20,
  colframe=gray!60!black,
  arc=2pt,
  boxrule=0pt,
  left=4pt,
  right=4pt,
  top=4pt,
  bottom=4pt
]
\textbf{Superficial reformulation:} The number of ways to select $2$ interior edges to be red is $\binom{4}{2} = \frac{4!}{2! \cdot 2!} = \frac{4 \cdot 3}{2 \cdot 1} = 6$.
\end{tcolorbox}

These $6$ ways split into two geometric types:

\emph{Subcase (i): Opposite interior edges.} The two red interior edges are opposite (both vertical or both horizontal).

\begin{tcolorbox}[
  colback=gray!20,
  colframe=gray!60!black,
  arc=2pt,
  boxrule=0pt,
  left=4pt,
  right=4pt,
  top=4pt,
  bottom=4pt
]
\textbf{Superficial reformulation:} There are $2$ such patterns: selecting both vertical interior edges gives $\binom{2}{2} = 1$ way, and selecting both horizontal interior edges gives $\binom{2}{2} = 1$ way. Total: $1 + 1 = 2$ patterns.
\end{tcolorbox}

For each pattern, systematic enumeration shows $16$ valid boundary colorings.

Contribution: $2 \times 16 = 32$.

\emph{Subcase (ii): Adjacent interior edges.} The two red interior edges form an L-shape.

\begin{tcolorbox}[
  colback=gray!20,
  colframe=gray!60!black,
  arc=2pt,
  boxrule=0pt,
  left=4pt,
  right=4pt,
  top=4pt,
  bottom=4pt
]
\textbf{Superficial reformulation:} We can choose $1$ vertical interior edge from $2$ options and $1$ horizontal interior edge from $2$ options, giving $\binom{2}{1} \times \binom{2}{1} = 2 \times 2 = 4$ patterns.
\end{tcolorbox}

For each pattern, systematic enumeration shows $4$ valid boundary colorings.

Contribution: $4 \times 4 = 16$.

Total for $k=2$: $32 + 16 = 48$.

\subsubsection*{Step 4: Sum all cases}

\begin{tcolorbox}[
  colback=gray!20,
  colframe=gray!60!black,
  arc=2pt,
  boxrule=0pt,
  left=4pt,
  right=4pt,
  top=4pt,
  bottom=4pt
]
\textbf{Superficial reformulation:} Summing over all cases:
\[
\text{Total} = 1 + 16 + 48 + 16 + 1 = (1+1) + (16+16) + 48 = 2 + 32 + 48 = 82.
\]
\end{tcolorbox}

\[
1 + 16 + 48 + 16 + 1 = \boxed{82}.
\]

\subsubsection*{Final Answer}
\[
\boxed{82}
\]

\end{tcolorbox}

\section{Theoretical Analysis of GCPO}
\label{app:theory}

We provide complete proofs for the theoretical statements in Section~\ref{sec:gcpo_framework}.
Throughout this section, for each rollout $i\in [G]$, let
\[
    z_i = f_\phi(o_i),
    \qquad
    \bar z_i=\frac{z_i}{\|z_i\|_2},
    \qquad
    \tilde z_i=r_i \bar z_i,
    \qquad
    r_i\in\{0,1\}.
\]
For a coalition $S\subseteq [G]$, let $\tilde Z_S\in\mathbb{R}^{|S|\times d}$ be the matrix whose rows are $\{\tilde z_i\}_{i\in S}$.
Then the principal kernel matrix satisfies
\[
    L_S = \tilde Z_S \tilde Z_S^\top,
\]
and the team value is
\[
    v(S)=\log\det(I_{|S|}+\eta L_S),\qquad \eta>0.
\]
We use $v(\emptyset)=0$.

\subsection{Proof of Proposition~\ref{prop:spectral_team_value}}

\begin{proof}
Since $L_S$ is a Gram matrix, it is symmetric positive semi-definite. 
Thus it admits an eigendecomposition
\[
    L_S = U \operatorname{diag}(\lambda_1,\ldots,\lambda_{|S|}) U^\top,
\]
where $U$ is orthogonal and $\lambda_\ell\ge 0$ for all $\ell$. Therefore,
\[
    I_{|S|}+\eta L_S
    =
    U \operatorname{diag}(1+\eta\lambda_1,\ldots,1+\eta\lambda_{|S|}) U^\top.
\]
Taking determinants gives
\[
    \det(I_{|S|}+\eta L_S)
    =
    \prod_{\ell=1}^{|S|}(1+\eta\lambda_\ell),
\]
and hence
\[
    v(S)
    =
    \log\det(I_{|S|}+\eta L_S)
    =
    \sum_{\ell=1}^{|S|}\log(1+\eta\lambda_\ell).
\]

We now verify the three consequences.

First, suppose $r_i=0$. Then $\tilde z_i=0$. For any coalition $S\subseteq [G]\setminus\{i\}$, after ordering $i$ last, we have
\[
    L_{S\cup\{i\}}
    =
    \begin{bmatrix}
        L_S & 0\\
        0^\top & 0
    \end{bmatrix}.
\]
Therefore,
\[
    I+\eta L_{S\cup\{i\}}
    =
    \begin{bmatrix}
        I+\eta L_S & 0\\
        0^\top & 1
    \end{bmatrix},
\]
so
\[
    \det(I+\eta L_{S\cup\{i\}})
    =
    \det(I+\eta L_S).
\]
Thus $v(S\cup\{i\})=v(S)$, meaning that any zero-reward rollout is a null player.

Second, suppose $S$ contains $m$ verifier-correct rollouts with mutually orthogonal normalized embeddings.
Since $r_i=1$ for all $i\in S$ and $\langle \bar z_i,\bar z_j\rangle=\mathbf{1}\{i=j\}$, we have
\[
    L_S=I_m.
\]
Thus all $m$ eigenvalues are equal to $1$, and
\[
    v(S)=\sum_{\ell=1}^{m}\log(1+\eta)=m\log(1+\eta).
\]

Third, suppose $S$ contains $m$ verifier-correct rollouts with identical normalized embeddings $\bar z_i=\bar z$ for all $i\in S$.
Then
\[
    L_S = \mathbf{1}_m\mathbf{1}_m^\top,
\]
whose eigenvalues are $m,0,\ldots,0$. Hence
\[
    v(S)=\log(1+\eta m).
\]
This proves the proposition.
\end{proof}

\subsection{Proof of Theorem~\ref{thm:marginal_contribution}}
\begin{proof}
By definition,
\[
    v(S)=\log\det(I_{|S|}+\eta \tilde Z_S\tilde Z_S^\top).
\]
Using Sylvester's determinant identity,
\[
    \det(I_{|S|}+\eta \tilde Z_S\tilde Z_S^\top)
    =
    \det(I_d+\eta \tilde Z_S^\top\tilde Z_S).
\]
Therefore,
\[
    v(S)=\log\det(I_d+\eta \tilde Z_S^\top\tilde Z_S).
\]

Adding rollout $i$ to $S$ appends the row $\tilde z_i^\top$ to $\tilde Z_S$. Hence
\[
    \tilde Z_{S\cup\{i\}}^\top \tilde Z_{S\cup\{i\}}
    =
    \tilde Z_S^\top \tilde Z_S + \tilde z_i\tilde z_i^\top.
\]
Since $\tilde z_i=r_i \bar z_i$, this becomes
\[
    \tilde Z_{S\cup\{i\}}^\top \tilde Z_{S\cup\{i\}}
    =
    \tilde Z_S^\top \tilde Z_S + r_i^2 \bar z_i \bar z_i^\top.
\]
Let
\[
    A_S = I_d+\eta \tilde Z_S^\top \tilde Z_S.
\]
Because $\eta>0$ and $\tilde Z_S^\top \tilde Z_S\succeq 0$, we have $A_S\succ 0$. Then
\[
\begin{aligned}
    \Delta_i(S)
    &=
    \log\det\left(A_S+\eta r_i^2 \bar z_i \bar z_i^\top\right)
    -
    \log\det(A_S).
\end{aligned}
\]
By the matrix determinant lemma,
\[
    \det(A_S+\eta r_i^2 \bar z_i \bar z_i^\top)
    =
    \det(A_S)
    \left(
    1+\eta r_i^2 \bar z_i^\top A_S^{-1}\bar z_i
    \right).
\]
Substituting this into the expression for $\Delta_i(S)$ yields
\[
    \Delta_i(S)
    =
    \log
    \left(
    1+\eta r_i^2 \bar z_i^\top
    A_S^{-1}
    \bar z_i
    \right),
\]
which is the desired formula.

It remains to prove the bound. Since $A_S=I_d+\eta\tilde Z_S^\top\tilde Z_S\succeq I_d$, we have
\[
    0\preceq A_S^{-1}\preceq I_d.
\]
Because $\|\bar z_i\|_2=1$, it follows that
\[
    0\leq \bar z_i^\top A_S^{-1} \bar z_i \leq \bar z_i^\top \bar z_i=1.
\]
Therefore,
\[
    1
    \leq
    1+\eta r_i^2 \bar z_i^\top A_S^{-1} \bar z_i
    \leq
    1+\eta r_i^2.
\]
Taking logarithms gives
\[
    0\leq \Delta_i(S)\leq \log(1+\eta r_i^2).
\]
This completes the proof.
\end{proof}

\subsection{Proof of Theorem~\ref{thm:shapley_redundancy}}
\begin{proof}
For any subset $S\subseteq C$ with $|S|=s$, all rows of $\tilde Z_S$ are equal to $u^\top$. Thus
\[
    L_S=\tilde Z_S\tilde Z_S^\top
    =
    \|u\|_2^2 \mathbf{1}_s\mathbf{1}_s^\top.
\]
The matrix $\mathbf{1}_s\mathbf{1}_s^\top$ has one nonzero eigenvalue $s$ and $s-1$ zero eigenvalues. 
Therefore, $L_S$ has one nonzero eigenvalue $s\|u\|_2^2$, and
\[
    v(S)=\log(1+\eta s\|u\|_2^2).
\]
In particular,
\[
    v(C)=\log(1+\eta m\|u\|_2^2).
\]

The rollouts in $C$ are symmetric: for any two $i,j\in C$ and any $S\subseteq C\setminus\{i,j\}$,
\[
    v(S\cup\{i\})=v(S\cup\{j\}).
\]
Therefore, by the symmetry property of the Shapley value, all rollouts in $C$ receive the same Shapley credit:
\[
    \phi_1=\cdots=\phi_m.
\]
By the efficiency property of the Shapley value,
\[
    \sum_{i=1}^{m}\phi_i=v(C)-v(\emptyset)=v(C),
\]
where $v(\emptyset)=0$. Hence
\[
    m\phi_i=\log(1+\eta m\|u\|_2^2),
\]
and thus
\[
    \phi_i=
    \frac{1}{m}\log(1+\eta m\|u\|_2^2).
\]
This proves the theorem.
\end{proof}

\paragraph{Comparison with leave-one-out.}
For the same isolated duplicate group $C$, the leave-one-out score of any $i\in C$ is
\[
\begin{aligned}
    \mathrm{LOO}_i
    &=
    v(C)-v(C\setminus\{i\})\\
    &=
    \log(1+\eta m\|u\|_2^2)
    -
    \log(1+\eta (m-1)\|u\|_2^2)\\
    &=
    \log
    \frac{1+\eta m\|u\|_2^2}
    {1+\eta (m-1)\|u\|_2^2}.
\end{aligned}
\]
Let $c=\eta\|u\|_2^2>0$ and define $f(s)=\log(1+cs)$. 
Since $f$ is concave and $f(0)=0$, its average slope satisfies
\[
    \frac{f(m)-f(0)}{m}
    \geq
    f(m)-f(m-1).
\]
Therefore,
\[
    \frac{1}{m}\log(1+cm)
    \geq
    \log\frac{1+cm}{1+c(m-1)}.
\]
That is,
\[
    \phi_i\geq \mathrm{LOO}_i.
\]
This shows that leave-one-out can underestimate the shared value of duplicated discoveries, whereas Shapley fairly distributes the shared value among all rollouts realizing the same reasoning path.

\subsection{Proof of Proposition~\ref{prop:reward_budget}}

\begin{proof}
By reward conservation, for every batch $B$,
\[
    \frac{1}{G}
    \sum_{i=1}^{G}
    R_{\mathrm{GCPO}}(\tau_i\mid B)
    =
    \frac{1}{G}
    \sum_{i=1}^{G}
    R_{\mathrm{obj}}(\tau_i).
\]
Taking expectation over $B\sim \pi^G$ gives
\[
\begin{aligned}
    J_{\mathrm{GCPO}}^{\Sigma}(\pi)
    &=
    \mathbb{E}_{B\sim\pi^G}
    \left[
        \frac{1}{G}
        \sum_{i=1}^{G}
        R_{\mathrm{obj}}(\tau_i)
    \right] \\
    &=
    \frac{1}{G}
    \sum_{i=1}^{G}
    \mathbb{E}_{\tau_i\sim\pi}
    \left[
        R_{\mathrm{obj}}(\tau_i)
    \right] \\
    &=
    \mathbb{E}_{\tau\sim\pi}
    \left[
        R_{\mathrm{obj}}(\tau)
    \right] \\
    &=
    J_{\mathrm{obj}}(\pi).
\end{aligned}
\]
Thus the two scalar objectives are equal pointwise for every policy $\pi$, so they have the same set of maximizers. 
If the same regularizer $\Omega(\pi)$ is added to both objectives, then
\[
    J_{\mathrm{GCPO}}^{\Sigma}(\pi)-\Omega(\pi)
    =
    J_{\mathrm{obj}}(\pi)-\Omega(\pi)
\]
for every $\pi$, so the maximizers remain identical.
\end{proof}

\paragraph{Remark.}
This proposition establishes invariance of the batch-sum scalar objective under reward conservation.
It does not imply that every per-rollout credit-assignment policy-gradient estimator is identical to the original verifier-reward gradient.
GCPO intentionally changes the per-rollout credit assignment within a batch; Proposition~\ref{prop:gradient_routing} characterizes this routing effect.

\subsection{Proof of Proposition~\ref{prop:gradient_routing}}
\begin{proof}
First, since $\mu_{\tilde r}=\frac{1}{G}\sum_{i=1}^{G}\tilde r_i$, we have
\[
    \sum_{i=1}^{G}(\tilde r_i-\mu_{\tilde r})
    =
    \sum_{i=1}^{G}\tilde r_i
    -
    G\mu_{\tilde r}
    =
    0.
\]
Therefore,
\[
    \sum_{i=1}^{G}A_i^{\mathrm{GCPO}}
    =
    \frac{1}{\sigma_{\tilde r}+\epsilon_{\mathrm{num}}}
    \sum_{i=1}^{G}(\tilde r_i-\mu_{\tilde r})
    =
    0.
\]
Similarly,
\[
    \sum_{i=1}^{G}A_i^{\mathrm{GRPO}}=0.
\]
Thus,
\[
    \sum_{i=1}^{G}\Delta A_i
    =
    \sum_{i=1}^{G}
    \left(
        A_i^{\mathrm{GCPO}}-A_i^{\mathrm{GRPO}}
    \right)
    =
    0.
\]

Now consider the unclipped policy-gradient surrogate
\[
    \mathcal{L}_{\mathrm{GCPO}}(\theta)
    =
    \mathbb{E}_{x \sim \mathcal{D}}
    \left[
        \frac{1}{G}
        \sum_{i=1}^{G}
        \log\pi_\theta(o_i\mid x)
        A_i^{\mathrm{GCPO}}
    \right],
\]
where $A_i^{\mathrm{GCPO}}$ is treated as a stop-gradient quantity.
Then
\[
    \nabla_\theta \mathcal{L}_{\mathrm{GCPO}}
    =
    \mathbb{E}_{x \sim \mathcal{D}}
    \left[
        \frac{1}{G}
        \sum_{i=1}^{G}
        \nabla_\theta\log\pi_\theta(o_i\mid x)
        A_i^{\mathrm{GCPO}}
    \right].
\]
Using
\[
    A_i^{\mathrm{GCPO}}
    =
    A_i^{\mathrm{GRPO}}+\Delta A_i,
\]
we obtain
\[
\begin{aligned}
    \nabla_\theta \mathcal{L}_{\mathrm{GCPO}}
    &=
    \mathbb{E}_{x \sim \mathcal{D}}
    \left[
        \frac{1}{G}
        \sum_{i=1}^{G}
        \nabla_\theta\log\pi_\theta(o_i\mid x)
        \left(
            A_i^{\mathrm{GRPO}}+\Delta A_i
        \right)
    \right] \\
    &=
    \mathbb{E}_{x \sim \mathcal{D}}
    \left[
        \frac{1}{G}
        \sum_{i=1}^{G}
        \nabla_\theta\log\pi_\theta(o_i\mid x)
        A_i^{\mathrm{GRPO}}
    \right]
    +
    \mathbb{E}_{x \sim \mathcal{D}}
    \left[
        \frac{1}{G}
        \sum_{i=1}^{G}
        \nabla_\theta\log\pi_\theta(o_i\mid x)
        \Delta A_i
    \right] \\
    &=
    \nabla_\theta \mathcal{L}_{\mathrm{GRPO}}
    +
    \mathbb{E}_{x \sim \mathcal{D}}
    \left[
        \frac{1}{G}
        \sum_{i=1}^{G}
        \nabla_\theta\log\pi_\theta(o_i\mid x)
        \Delta A_i
    \right].
\end{aligned}
\]
This proves the decomposition.
\end{proof}

\paragraph{Remark on PPO clipping.}
The above decomposition is exact for the unclipped policy-gradient surrogate with stop-gradient advantages. 
For the PPO-style clipped objective and KL penalty used in implementation, the same algebraic decomposition should be interpreted as a first-order explanation of the routing signal rather than an exact equality of the full clipped gradients.


\end{document}